\documentclass[10pt,journal]{IEEEtran}
\usepackage{amsmath,amssymb,amsfonts,mathtools}
\usepackage{amsthm}
\usepackage{bm}
\usepackage{booktabs}
\usepackage{array}
\usepackage{tabularx}
\usepackage{cuted}
\usepackage{graphicx}
\usepackage{tikz}
\graphicspath{{../}}
\usepackage{enumitem}
\usepackage{xcolor}
\usepackage{cite}
\usepackage{microtype}
\usepackage{hyperref}
\usepackage{cleveref}
\hypersetup{
    colorlinks=true,
    linkcolor=blue,
    citecolor=blue,
    urlcolor=blue
}
\newcommand{\R}{\mathbb{R}}
\newcommand{\vect}[1]{\bm{#1}}
\newcommand{\tht}{\vect\theta}
\newcommand{\mom}{\vect p}
\newcommand{\vel}{\vect v}
\newcommand{\grad}{\nabla}
\newcommand{\T}{\mathsf T}

\newcommand{\diag}{\operatorname{diag}}

\newcommand{\Fgeo}{{\vect F_{\mathrm{geo}}}}

\newcommand{\dd}{\,\mathrm{d}}

\newtheorem{theorem}{Theorem}[section]
\newtheorem{lemma}[theorem]{Lemma}
\newtheorem{proposition}[theorem]{Proposition}
\newtheorem{corollary}[theorem]{Corollary}
\theoremstyle{definition}
\newtheorem{definition}[theorem]{Definition}
\newtheorem{assumption}[theorem]{Assumption}
\theoremstyle{remark}
\newtheorem{remark}[theorem]{Remark}

\newcounter{hamletalgorithm}

\title{
Hamiltonian Metric Learning and Energy-Based Training\\
\LARGE A Dissipative Geometric Framework for Optimization}
\author{Sparsho~Chakraborty\textsuperscript{1}, Mohammad~Alamgir\textsuperscript{2}, Nishanth~M\textsuperscript{1},
Akshit~Nanda\textsuperscript{3}, Ram~Prasad~Padhy\textsuperscript{3}, and~Sayan~Mukherjee\textsuperscript{4}\\[1em]
\small\textsuperscript{1}Department of Physics, School of Basic Sciences, Indian Institute of Technology Bhubaneswar, Odisha, India\\
\small\textsuperscript{2}Department of Electronics and Computer Science, Narula Institute of Technology, West Bengal, India \\
\small\textsuperscript{3}Department of Computer Science and Engineering, School of Electrical and Computer Sciences, Indian Institute of Technology Bhubaneswar, Odisha, India \\
\small\textsuperscript{4}Department of Physics, The University of Tokyo, Hongo, Bunkyo-ku, Tokyo, Japan}
\begin{document}
\maketitle

\begin{abstract}
Optimization in machine learning is usually expressed through iterative
rules that update model parameters using information from the loss
landscape. In this work, we study an alternative viewpoint in which parameter
optimization is treated as the evolution of a dissipative dynamical system.
The model parameters are regarded as generalized coordinates, the loss acts
as a potential energy, and a positive-definite metric defines the local
kinetic geometry of parameter space. Starting from a variational formulation, we derive the corresponding
Hamiltonian dynamics, the geometric force generated by a position-dependent metric, and a metric-compatible Rayleigh dissipation law.
The resulting continuous system satisfies a monotonic energy-dissipation
relation, while its discrete form allows the influence of curvature on the
optimization trajectory to be studied directly.
We illustrate the framework using a controlled CIFAR-10 image-reconstruction
problem for which the optimum is known analytically. With the image Hessian
used as the metric, the curvature dependence of the quadratic modal dynamics
is removed. We then reparameterize the same image as a matrix product state,
producing a genuinely position-dependent metric and a nonzero geometric
force. Poincar\'e return maps provide a complementary phase-space view of
the resulting contraction dynamics.
These examples establish HAMLET as a geometric, energy-based framework for
studying optimization as dissipative motion in parameter space. On a
five-seed MNIST MLP benchmark, HAMLET attains the highest mean test accuracy
(97.95\%) and the lowest mean test negative log-likelihood (0.0696) among the
three evaluated optimizers.
\\
    
    \begin{IEEEkeywords}
        Machine Learning Optimization, Hamiltonian Dynamics, Dissipative Systems, Poisson Systems, Energy-Based Optimization, Parameter Manifolds, Dynamical Systems, MNIST, Image Reconstruction, Gradient-Based Learning.
    \end{IEEEkeywords}
\end{abstract}
\section{Introduction}
Optimization is the process of finding values of a set of variables that make an objective function as small or as large as possible, subject to any constraints of the problem.
In machine learning, the variables being optimized are usually the model parameters \(\theta\), and the objective is a loss function \(L(\theta)\). The objective information tells us how well the current parameters perform. The parameter-space information describes how changes in the parameters should be interpreted. In the simplest case, parameter space is treated as Euclidean. More sophisticated optimization methods may incorporate curvature, a metric, statistical information, or some other local structure. Finally, an update rule determines how the parameters evolve using this information. Unlike conventional optimization, which specifies an objective and a rule for moving through parameter space, we ask whether this parameter evolution can instead be formulated as the trajectory of a dynamical system whose state obeys physically motivated equations of motion. If the parameters are regarded as dynamical variables, then optimization can be viewed as motion through parameter space. In this picture, the loss landscape plays a role analogous to a potential-energy landscape, while the parameter trajectory determines how the system moves across it. A purely conservative system would generally oscillate around low-energy configurations rather than settle at them, so dissipation becomes essential if the dynamics are to converge toward a stationary state. This motivates a formulation in which the model parameters \(\theta\) are treated as generalized coordinates, their rates of change define velocities, and a parameter-space metric \(g(\theta)\) determines the local kinetic structure of the system. The trajectory followed by such a system depends not only on the loss landscape, but also on the geometry assigned to the parameter space. If the metric is Euclidean, all directions are treated uniformly. In contrast, a position-dependent positive-definite metric \(g(\theta)\) allows different directions in parameter space to carry different effective kinetic costs. Directions associated with large metric values behave as if they possess greater inertia, while directions associated with smaller metric values permit faster motion. In this way, the metric modifies how momentum is converted into parameter velocity and therefore reshapes the optimization trajectory without changing the location of the minima of the loss itself. Because the metric may vary with \(\theta\), this geometric structure also introduces additional forces into the dynamics. These arise naturally from the position dependence of the kinetic energy rather than being inserted as heuristic correction terms. The resulting motion is therefore governed jointly by the gradient of the loss, the local geometry of the parameter manifold, and dissipative effects that remove energy from the system.
This leads to the central question of the proposed framework: can such dissipative geometric dynamics provide a useful and mathematically consistent framework for machine-learning optimization? To study this, we derive the  Hamiltonian Metric based Learning and Energy based Training (HAMLET) dynamics from a variational formulation, construct the corresponding Hamiltonian system, introduce metric-compatible dissipation, evaluate the resulting optimizer on MNIST classification, and examine its optimization behavior on a controlled image-reconstruction problem. The image problem is first studied directly in pixel space, where the
quadratic objective provides a constant curvature metric and allows the
spectral action of HAMLET to be examined analytically. We then represent the
same image as a matrix product state, transforming the problem into a
nonlinear parameter manifold with a position-dependent metric. This second
representation activates the geometric-force term and provides a controlled
example of the full HAMLET dynamics. To further characterize the resulting system dynamics, we employ Poincar\'e return maps which provide a geometric description of how different curvature modes contract under dynamics and allow the effect of metric normalization to be visualized directly.

The remainder of the paper is organized into five parts.
The first develops the variational and geometric formulation of HAMLET.
The second introduces dissipation and examines the associated energy and
symmetry structure. The third derives the discrete dynamics and studies
their stability. The fourth reports an MNIST MLP benchmark. The fifth
applies the framework to a controlled image-reconstruction problem, first in pixel space and then on an MPS
parameter manifold, allowing both constant- and position-dependent metrics
to be examined explicitly, and analyzes the resulting phase-space behavior
through Poincar\'e return maps.
\section{Related Works}\label{Related works}
A substantial body of work has interpreted optimization algorithms through
continuous-time dynamical systems. One of the most influential examples is
the differential-equation interpretation of Nesterov's accelerated gradient
method developed by Su, Boyd, and Cand\`es~\cite{su-boyd-candes}. In this
view, acceleration emerges from a second-order ordinary differential
equation whose inertial and damping terms help explain the behavior of the
underlying discrete algorithm. Such formulations established a useful bridge
between iterative optimization and classical dynamical-systems analysis,
including questions of stability, asymptotic behavior, and convergence rates.

A related line of work derives optimization dynamics from variational
principles. The Bregman Lagrangian framework of Wibisono, Wilson, and
Jordan~\cite{wibisono} shows that broad families of accelerated optimization
methods can be understood through appropriately chosen Lagrangians and
time-dependent variational structures. This viewpoint is particularly
important for our work because it demonstrates that optimization
updates need not be introduced only as algebraic rules: they may instead
arise from an underlying mechanical principle. HAMLET follows this general
philosophy, but adopts a metric kinetic energy on the parameter space and carries
the corresponding geometric terms explicitly into the equations of motion.

Hamiltonian formulations provide another natural phase-space description of
optimization. Hamiltonian descent methods~\cite{maddison} introduce momentum
variables and study optimization through Hamiltonian dynamics, thereby
separating parameter coordinates from their conjugate momenta. This
phase-space perspective is closely related to classical mechanics and offers
a natural language for inertial optimization. However, conservative
Hamiltonian dynamics alone does not generically converge to a minimizer,
since trajectories may continue to oscillate around low-energy regions.
Dissipation therefore plays a central role whenever Hamiltonian dynamics are
used as a model for optimization.

Several works have addressed this issue by introducing nonconservative
geometric structures. Conformal symplectic optimization
methods~\cite{franca-conformal} study dissipative modifications of
Hamiltonian dynamics while preserving a controlled deformation of the
underlying symplectic structure. Contact-geometric approaches
\cite{bravetti-contact} provide another intrinsic description of dissipative
systems and have been used to formulate optimization dynamics in an extended
geometric setting. These works demonstrate that dissipation can be
incorporated consistently into mechanics-inspired optimization rather than
being treated merely as an ad hoc numerical damping term.

In parallel, geometric optimization methods emphasize that the parameter
space itself need not be treated as Euclidean. Natural-gradient methods
\cite{amari,martens-natural} replace the ordinary Euclidean gradient by a
geometry-aware direction, typically defined through a positive-definite
information matrix. More broadly, Riemannian optimization
\cite{absil} studies optimization over curved spaces by introducing a metric,
geodesics, tangent spaces, and geometry-consistent descent directions. These
approaches establish that the local geometry of parameter space can strongly
affect the effective direction and scale of parameter updates.

Our framework, HAMLET, is positioned at the intersection of these viewpoints. The framework
does not introduce continuous-time optimization, Hamiltonian dynamics,
dissipation, or Riemannian geometry individually as new concepts. Instead,
it combines them within a single mechanical construction in which the loss
function plays the role of potential energy and a positive-definite metric
field $g(\tht)$ determines the kinetic geometry of parameter space. The
metric defines the canonical relation
\[
\mom = g(\tht)\dot{\tht},
\]
and therefore controls how momentum is converted into parameter velocity.
When the metric depends on position, its variation contributes an additional
geometric force to the canonical momentum equation. This term is not inserted
as an external correction but follows directly from differentiating the
metric-dependent kinetic energy.

Dissipation is introduced through a Rayleigh dissipation function that is
compatible with the same parameter-space metric. As a result, the
Hamiltonian, geometric force, and damping law are derived within one
consistent variational and mechanical framework. This yields the parameter
update equation in a dissipative form with an associated continuous-time
energy law. The role of the metric is therefore twofold: it acts locally as a
direction-dependent inertial structure and, when position dependent, it also
modifies the trajectory through the induced geometric force.
\section{Mathematical Formulation}\label{Mathematical Formulation}
\subsection{From loss to Minimization to Metric Motion}\label{From loss to Minimization to Metric Motion}
\subsubsection{From a Scalar loss to Gradient}
\label{From a Scalar loss to Gradient}
Let $\tht\in\Omega\subseteq\R^n$, $n\geq1$, collect the trainable parameters. The
optimization problem is
\begin{equation}
 \min_{\tht\in\Omega}L(\tht),\qquad
 L(\tht)=\frac{1}{N}\sum_{a=1}^N L_a(\tht).
 \label{eq:objective}
\end{equation}
Define the gradient and Hessian by
\begin{equation}
 [\grad L]_i=\partial_iL,\qquad [B_L]_{ij}=\partial_i\partial_jL.
\end{equation}
Taylor expansion gives
\begin{equation}
 \begin{split}
 L(\tht+\vect d)={}&L(\tht)+\grad L^{\T}\vect d\\
 &+\tfrac12\vect d^{\T} B_L\vect d+o(\|\vect d\|^2).
 \end{split}
 \label{eq:taylor}
\end{equation}
Choosing $\vect d=-a\grad L$, $a>0$, makes the first-order change
$-a\|\grad L\|^2$. This motivates gradient descent; it does not yet introduce
inertia or a metric.

\subsubsection{Mechanical Motion}
Introduce a differentiable trajectory $\tht(t)$, velocity
$\vel=\dot\tht$, and acceleration $\dot\vel=\ddot\tht$. Here $t$ is
optimization time. The mechanical analogy identifies position with parameters
and potential with loss. For one coordinate and constant mass $m>0$, it gives
\begin{equation}
 T=\tfrac12m\dot\theta^2,\quad V=L(\theta),\quad m\ddot\theta=-L'(\theta).
 \label{eq:newton}
\end{equation}
The last equation will follow from the variational construction rather than
being imposed independently.
\subsubsection{The Metric as Direction Dependent Mass} \label{The Metric as a Direction Dependent Mass}
A symmetric positive-definite matrix $g(\tht)$ generalizes the scalar mass:
\begin{equation}
 g=g^{\T},\qquad \vect x^{\T}g\vect x>0\quad(\vect x\ne0).
 \label{eq:spd}
\end{equation}
It defines both local squared length and kinetic energy,
\begin{equation}
 \dd s^2=\dd\tht^{\T} g\dd\tht,\qquad
 T(\tht,\vel)=\tfrac12\vel^{\T} g(\tht)\vel.
 \label{eq:kinetic}
\end{equation}
An eigendirection with a large metric eigenvalue carries a greater kinetic cost
at the same velocity. This is a choice of optimization geometry, not a claim 
that trainable parameters have physical mass.
Unless stated otherwise, $L$ and $g$ are sufficiently smooth on an open domain,
$g$ is positive definite, and derivatives are taken in a fixed parameter chart.
Energy identities need differentiability; existence, uniqueness, and limit-set
arguments require the corresponding local Lipschitz and compactness conditions.
If $g$ is built from a Hessian, differentiating it requires third derivatives of
$L$; stronger smoothness of $L$ is required whenever a later result invokes the
standing assumption $g\in C^2$. Nonsmooth neural-network losses require a smooth
surrogate or a separately specified generalized-derivative treatment. For
example, ReLU activations can be replaced by the softplus
$\tau^{-1}\log(1+e^{\tau x})$, which tends to ReLU as $\tau\to\infty$, an
$L_1$ residual by the pseudo-Huber loss
$\sigma\big(\sqrt{1+(r/\sigma)^2}-1\big)$, and a hard maximum by a
log-sum-exp; each surrogate is $C^\infty$ and recovers the original loss in the
limit of its smoothing parameter.
\begin{table}[t]
\caption{Notation Used in the Derivation}
\label{tab:notation}
\centering\footnotesize
\begin{tabular}{@{}p{0.23\columnwidth}p{0.69\columnwidth}@{}}
\toprule
Symbol & Meaning\\
\midrule
$\tht,\vel,\mom$ & Parameters, velocity, canonical momentum\\
$L,B_L$ & Loss and its Hessian\\
$g,g^{-1}$ & Kinetic metric and inverse metric\\
$\mathcal L,\mathcal A$ & Lagrangian and action\\
$H_0$ & HAMLET Hamiltonian\\
$\mathcal R,\gamma$ & Rayleigh function and damping rate\\
$\Fgeo$ & Geometric force from the position dependence of $g$\\
$h,\beta$ & Discrete time step and momentum retention, $\beta=1-\gamma h$\\
$J_\xi$ & Noether Current\\
$\xi$ & Killing field of g \\
\bottomrule
\end{tabular}
\end{table}

\subsubsection{Assumptions}\label{Assumpations}
\begin{assumption}
[Autonomous core]\label{ass:core}
The domain $\Omega\subseteq\R^n$ is open, $L\in C^2(\Omega)$, and
$g\in C^2(\Omega;\mathbb S_{++}^n)$ is time independent. The damping rate
$\gamma$ is a positive constant. These assumptions give a locally Lipschitz
phase-space vector field. Compactness and uniform metric bounds are imposed
explicitly where needed; they do not follow from local smoothness alone.
\end{assumption}
\begin{remark}[Scope of the standing assumption]\label{rem:scope}
The curvature-shifted metric \eqref{eq:regularization} used later is Lipschitz
but not $C^1$ at an eigenvalue crossing of $C$ or at the switching point of
the maximum, so it satisfies Assumption~\ref{ass:core} only on the open set
where $\lambda_{\min}(C)$ is simple and the shift does not switch. Every
result stated under Assumption~\ref{ass:core} is therefore to be read on such
a smooth branch. At points where the metric remains differentiable, the energy
identity continues to hold. Extending the invariance argument across nonsmooth
switching sets would require a generalized-gradient treatment, which is outside
the present scope. A fixed shift valid over the region of interest, or a smooth
SPD construction, removes the issue entirely.
\end{remark}
\subsection{Variational Principle to Update Equations}\label{Variational Principal to Update Equat}
\subsubsection{Lagrangian and Euler--Lagrange Equation}
% ------------------------------------------------------------
Define the Lagrangian and action by
\begin{align}
 \mathcal L(\tht,\vel)&=T(\tht,\vel)-L(\tht),\\
 \mathcal A[\tht]&=\int_{t_0}^{t_1}\mathcal L(\tht,\dot\tht)\dd t.
\end{align}
The loss $L$ and Lagrangian $\mathcal L$ are different quantities. The
stationary-action assumption is $\delta\mathcal A=0$ for every perturbation
$\tht_\epsilon=\tht+\epsilon\vect\xi$ with
$\vect\xi(t_0)=\vect\xi(t_1)=0$. The chain rule gives
\begin{equation}
 \delta\mathcal A=\int_{t_0}^{t_1}\sum_i
 \left(\frac{\partial\mathcal L}{\partial\theta_i}\xi_i
 +\frac{\partial\mathcal L}{\partial v_i}\dot\xi_i\right)\dd t.
\end{equation}
Integration by parts yields
\begin{equation}
 \begin{split}
 \int_{t_0}^{t_1}\frac{\partial\mathcal L}{\partial v_i}\dot\xi_i\dd t
 ={}&\left[\frac{\partial\mathcal L}{\partial v_i}\xi_i\right]_{t_0}^{t_1}\\
 &-\int_{t_0}^{t_1}\frac{\mathrm d}{\mathrm dt}
 \left(\frac{\partial\mathcal L}{\partial v_i}\right)\xi_i\dd t .
 \end{split}
\end{equation}
The boundary term vanishes. Because the perturbations are arbitrary in the
interior, the fundamental lemma of the calculus of variations gives
\begin{equation}
 \frac{\mathrm d}{\mathrm dt}
 \left(\frac{\partial\mathcal L}{\partial v_i}\right)
 -\frac{\partial\mathcal L}{\partial\theta_i}=0.
 \label{eq:EL}
\end{equation}
For the scalar constant-mass model,
$\partial_{\dot\theta}\mathcal L=m\dot\theta$ and
$\partial_\theta\mathcal L=-L'(\theta)$, so \eqref{eq:EL} gives
\eqref{eq:newton}. Stationarity is not necessarily minimization of the action.
% ------------------------------------------------------------
\subsubsection{Canonical Momentum}
% ------------------------------------------------------------
Define $p_i=\partial\mathcal L/\partial v_i$. Differentiating the quadratic
kinetic term explicitly,
\begin{align}
 p_i&=\frac{\partial}{\partial v_i}
       \left(\tfrac12\sum_{j,k}v_jg_{jk}v_k\right)\\
 &=\tfrac12\sum_k g_{ik}v_k+\tfrac12\sum_jv_jg_{ji}
 =\sum_jg_{ij}v_j.
\end{align}
Symmetry and positive definiteness therefore imply
\begin{equation}
 \mom=g\vel,\qquad \vel=g^{-1}\mom.
 \label{eq:momentum}
\end{equation}
The positivity assumption makes the velocity-to-momentum map invertible and
the quadratic Legendre transform regular.
% ------------------------------------------------------------
\subsubsection{Legendre Transform and Hamilton's Equations}
% ------------------------------------------------------------
Define the Hamiltonian at fixed $\tht$ by
\begin{equation}
 H_0(\tht,\mom)=\mom^{\T}\vel-\mathcal L(\tht,\vel),\qquad
 \vel=g^{-1}\mom.
\end{equation}
Substitution gives
\begin{align}
 H_0&=\mom^{\T} g^{-1}\mom
 -\left[\tfrac12(g^{-1}\mom)^{\T} g(g^{-1}\mom)-L\right]\\
 &=\tfrac12\mom^{\T} g^{-1}(\tht)\mom+L(\tht).
 \label{eq:H0}
\end{align}
To derive Hamilton's equations, differentiate the defining transform:
\begin{equation}
 \begin{split}
 \dd H_0={}&\vel^{\T}\dd\mom+\mom^{\T}\dd\vel\\
 &-(\partial_{\tht}\mathcal L)^{\T}\dd\tht
 -(\partial_{\vel}\mathcal L)^{\T}\dd\vel\\
 ={}&\vel^{\T}\dd\mom-(\partial_{\tht}\mathcal L)^{\T}\dd\tht.
 \end{split}
\end{equation}
The $\dd\vel$ terms cancel by the definition of $\mom$.
Equation~\eqref{eq:EL} gives $\partial_{\tht}\mathcal L=\dot\mom$.
Matching coefficients of the independent differentials
$\dd\tht$ and $\dd\mom$ proves
\begin{equation}
 \dot\tht=\partial_{\mom} H_0,\qquad
 \dot\mom=-\partial_{\tht} H_0.
 \label{eq:hamilton}
\end{equation}
For \eqref{eq:H0}, these equations become
\begin{align}
 \dot\tht&=g^{-1}\mom,\\
 \dot p_i&=-\partial_iL-\tfrac12\mom^{\T}(\partial_i g^{-1})\mom.
 \label{eq:canonical-force}
\end{align}
In the position derivative, canonical momentum is held fixed. Differentiating
$\mom=g\vel$ inside this partial derivative while also holding $\vel$ fixed
would mix the Lagrangian and Hamiltonian coordinate systems.
\begin{definition}[Metric Hamiltonian]
The function $H_0$ in \eqref{eq:H0}, with $\mom=g\vel$, is the
HAMLET Hamiltonian. Dissipation changes its evolution law without
changing this Legendre-transform definition.
\end{definition}
% ------------------------------------------------------------
\subsubsection{Symmetries and Noether Currents}
\label{sec:noether}
% ------------------------------------------------------------
The variational form also shows which quantities the conservative HAMLET
flow preserves. Let $\xi:\Omega\to\R^n$ be a smooth vector field and
consider the infinitesimal change of coordinates
$\tht\mapsto\tht+\epsilon\,\xi(\tht)$. Velocities change by
$\vel\mapsto\vel+\epsilon\,(D\xi)\vel$, with $(D\xi)_{kj}=\partial_j\xi_k$.
The first variation of the Lagrangian is
\begin{align}
\delta_\xi\mathcal L
&=\frac{\partial\mathcal L}{\partial\tht}^{\T}\xi
 +\frac{\partial\mathcal L}{\partial\vel}^{\T}(D\xi)\vel
 =\tfrac12\vel^{\T}(\mathfrak L_\xi g)\vel-\xi^{\T}\grad L,
\label{eq:noether-variation}\\
(\mathfrak L_\xi g)_{ij}&=\sum_k\big(\xi_k\partial_kg_{ij}
 +g_{kj}\partial_i\xi_k+g_{ik}\partial_j\xi_k\big),
\label{eq:lie-derivative}
\end{align}
where $\mathfrak L_\xi g$ is the Lie derivative of the metric. The field
$\xi$ is a \emph{Killing field} of $g$ if $\mathfrak L_\xi g=0$, so that the
coordinate change is an isometry, and a \emph{symmetry of the loss} if
$\xi^{\T}\grad L\equiv0$.
\begin{definition}[Noether current]
The Noether current of $\xi$ is
\begin{equation}
J_\xi(\tht,\vel)=\frac{\partial\mathcal L}{\partial\vel}^{\T}\xi
=\mom^{\T}\xi(\tht)=\vel^{\T}g(\tht)\,\xi(\tht).
\label{eq:noether-current}
\end{equation}
\end{definition}
\begin{theorem}[Noether identity]\label{thm:noether}
Along every solution of the Euler--Lagrange equation \eqref{eq:EL},
\begin{equation}
\dot J_\xi=\delta_\xi\mathcal L
=\tfrac12\vel^{\T}(\mathfrak L_\xi g)\vel-\xi^{\T}\grad L .
\label{eq:noether-identity}
\end{equation}
In particular, if $\xi$ is a Killing field of $g$ and a symmetry of the loss,
$J_\xi$ is conserved.
\end{theorem}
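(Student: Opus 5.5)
The plan is a direct time derivative of $J_\xi=\mom^{\T}\xi(\tht)$ along a solution, followed by substitution of the Euler--Lagrange equation \eqref{eq:EL}. By the product rule,
\[
\dot J_\xi=\dot\mom^{\T}\xi(\tht)+\mom^{\T}(D\xi)\dot\tht
=\frac{\mathrm d}{\mathrm dt}\Big(\frac{\partial\mathcal L}{\partial\vel}\Big)^{\T}\xi+\frac{\partial\mathcal L}{\partial\vel}^{\T}(D\xi)\vel ,
\]
using $\mom=\partial\mathcal L/\partial\vel$ and $\dot\tht=\vel$. On a solution of \eqref{eq:EL}, the first term equals $(\partial\mathcal L/\partial\tht)^{\T}\xi$. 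The sum is then exactly the first expression for $\delta_\xi\mathcal L$ in \eqref{eq:noether-variation}. This proves the first equality $\dot J_\xi=\delta_\xi\mathcal L$ without using the specific form of $\mathcal L$. The required smoothness ($\tht\in C^2$ in time, $\xi\in C^1$) is available under Assumption~\ref{ass:core}.

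The second step is to verify the closed form in \eqref{eq:noether-variation}, which the paper states but which I would check explicitly. With $\mathcal L=\tfrac12\vel^{\T}g\vel-L$, we have
\[
(\partial\mathcal L/\partial\tht)^{\T}\xi=\tfrac12\sum_{k}\xi_k\,\vel^{\T}(\partial_kg)\vel-\xi^{\T}\grad L .
\]
We also have
\[
(g\vel)^{\T}(D\xi)\vel=\sum_{i,j,k}v_ig_{ik}\partial_j\xi_k v_j .
\]
Because this is a scalar quadratic form in $\vel$, it can be symmetrized. It equals half the sum of $g_{ik}\partial_j\xi_k$ and $g_{kj}\partial_i\xi_k$, contracted with $v_iv_j$, after relabeling $i\leftrightarrow j$ and using $g=g^{\T}$. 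Adding the two pieces reproduces $\tfrac12\vel^{\T}(\mathfrak L_\xi g)\vel$ with the coordinate formula \eqref{eq:lie-derivative}. The main obstacle is this index bookkeeping: the symmetrization step must produce the factor $\tfrac12$ correctly, and the index placement must match \eqref{eq:lie-derivative} exactly, given the convention $(D\xi)_{kj}=\partial_j\xi_k$.

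Finally, the conservation corollary is immediate. If $\mathfrak L_\xi g=0$ and $\xi^{\T}\grad L\equiv0$ on $\Omega$, both terms on the right of \eqref{eq:noether-identity} vanish identically along the trajectory, so $\dot J_\xi=0$ and $J_\xi$ is constant on each solution interval. It is worth remarking that this concerns the conservative flow only. Under the Rayleigh dissipation introduced later, an extra term $-\gamma\,\mom^{\T}\xi$ will appear, but that lies outside the present statement.
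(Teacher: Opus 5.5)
Your proposal is correct and follows the paper's proof essentially step for step. The product rule combined with the Euler--Lagrange equation $\dot\mom=\partial_{\tht}\mathcal L$ gives $\dot J_\xi=\delta_\xi\mathcal L$, and the closed form follows from $\partial_{\tht}\mathcal L=\tfrac12\vel^{\T}(\partial g)\vel-\grad L$ together with the symmetrization $\mom^{\T}(D\xi)\vel=\tfrac12\vel^{\T}(gD\xi+D\xi^{\T}g)\vel$, which is exactly your index computation.
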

\begin{proof}
By the product rule,
$\dot J_\xi=\dot{\mom}^{\T}\xi+\mom^{\T}(D\xi)\vel$. The Euler--Lagrange
equation gives $\dot{\mom}=\partial\mathcal L/\partial\tht$, so
$\dot J_\xi=\delta_\xi\mathcal L$ by the first equality in
\eqref{eq:noether-variation}. For the second, use
$\partial_{\tht}\mathcal L=\tfrac12\vel^{\T}(\partial g)\vel-\grad L$ and
$\mom^{\T}(D\xi)\vel=\tfrac12\vel^{\T}(gD\xi+D\xi^{\T}g)\vel$.
\end{proof}
Time translation is the other classical symmetry. Because $\mathcal L$ has
no explicit time dependence, its Noether charge, the Legendre transform
$H_0=\mom^{\T}\vel-\mathcal L$, is conserved by the conservative flow. With
dissipation this becomes the energy theorem of
Section~\ref{Dissipative dynamics and Energy evolution}.
In Hamiltonian form $J_\xi=\mom^{\T}\xi$ generates the lifted coordinate
change, and conservation is $\{J_\xi,H_0\}=0$.

Three cases occur in optimization. (i) If $g=I$ and the loss is flat along
a fixed direction $u$, then $J=\mom^{\T}u$ is conserved: momentum along a
flat direction never decays in the conservative flow. (ii) If $g=I$ and
$\xi=A\tht$ with $A=-A^{\T}$ leaves the loss invariant, the angular
momentum $\mom^{\T}A\tht$ is conserved. (iii) For the MPS of
Section~\ref{sec:mps}, the gauge field
$\xi_X:(A^{(k)},A^{(k+1)})\mapsto(A^{(k)}X,-XA^{(k+1)})$ leaves the image
unchanged, so $\xi_X^{\T}\grad L\equiv0$ for every $X$. It is a Killing
field of the Euclidean metric exactly when $X=-X^{\T}$. Loss symmetries that
are not isometries, such as the general linear gauge or the layer rescalings
of neural networks, are not conserved; \eqref{eq:noether-identity} gives the
exact rate at which the metric breaks them, the kinetic symmetry breaking
of Noether's learning dynamics~\cite{tanaka-nld}. This is the inertial,
metric-dependent counterpart of the broken conservation laws of gradient
flow~\cite{kunin-neural-mechanics}.
\subsection{The Geometric Force and Curved Motion}

\subsubsection{Derivative of the Inverse Metric}
% ------------------------------------------------------------
\begin{lemma}[Inverse metric and canonical geometric force]
For a differentiable SPD metric, at fixed momentum,
\[
 \partial_i g^{-1}=-g^{-1}(\partial_i g)g^{-1},
\]
\[
 F_{\mathrm{geo},i}=\tfrac12\mom^{\T}(\partial_i g^{-1})\mom
 =-\tfrac12\vel^{\T}(\partial_i g)\vel.
\]
\end{lemma}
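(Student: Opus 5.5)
The plan is to differentiate the identity $g(\tht)\,g^{-1}(\tht)=I$ with respect to $\theta_i$ and then substitute the result into the canonical force term of \eqref{eq:canonical-force}. Under Assumption~\ref{ass:core} (or simply differentiability of an SPD metric), $g$ is invertible everywhere on $\Omega$. Its inverse is differentiable because matrix inversion is a smooth map on the open set of invertible matrices; entrywise, each entry of $g^{-1}$ is a cofactor divided by $\det g>0$. So the product rule applies, and it gives
\[
(\partial_i g)\,g^{-1}+g\,(\partial_i g^{-1})=0 .
\]
Left-multiplying by $g^{-1}$ yields the first identity, $\partial_i g^{-1}=-g^{-1}(\partial_i g)g^{-1}$. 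This derivative involves $\tht$ only. The phrase ``at fixed momentum'' records that in the Hamiltonian partial derivative $\partial_{\theta_i}$ the variable $\mom$ is held constant, as stressed after \eqref{eq:canonical-force}. The derivative therefore acts only on $g^{-1}$, with no term arising from $\mom$.

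For the second identity, I would define $F_{\mathrm{geo},i}=\tfrac12\mom^{\T}(\partial_i g^{-1})\mom$, the term appearing with a minus sign in \eqref{eq:canonical-force}. Substituting the first identity gives
\[
F_{\mathrm{geo},i}=-\tfrac12\mom^{\T}g^{-1}(\partial_i g)g^{-1}\mom .
\]
Because $g$ is symmetric, $g^{-1}$ is symmetric too, so $\mom^{\T}g^{-1}=(g^{-1}\mom)^{\T}$. The Legendre relation \eqref{eq:momentum}, $\vel=g^{-1}\mom$, then turns this into $-\tfrac12\vel^{\T}(\partial_i g)\vel$.

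The only delicate point is bookkeeping rather than analysis. The substitution $\vel=g^{-1}\mom$ must be made after the partial derivative has been taken at fixed $\mom$. Differentiating $\vel^{\T}g\vel$ at fixed $\vel$ happens to give the same expression, but only up to sign: the Hamiltonian force is $-\partial_{\tht}$ of the kinetic energy at fixed $\mom$, which equals $+\partial_{\tht}$ of the kinetic energy at fixed $\vel$. I would add a one-line consistency check against the Euler--Lagrange form. From \eqref{eq:EL}, $\dot p_i=\partial_i\mathcal L=\tfrac12\vel^{\T}(\partial_i g)\vel-\partial_iL$, which agrees with $\dot p_i=-\partial_iL-F_{\mathrm{geo},i}$ given the formula just derived.
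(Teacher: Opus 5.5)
Your proof is correct and follows the paper's argument. The paper differentiates $g^{-1}g=I$ and right-multiplies by $g^{-1}$, which mirrors your differentiating $g\,g^{-1}=I$ and left-multiplying; it then substitutes $\vel=g^{-1}\mom$, using symmetry, to obtain the velocity form. Your justification that $g^{-1}$ is differentiable and your Euler--Lagrange sign check (which the paper places in Remark~\ref{rem:fgeo-sign}) are sound additions.
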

\begin{proof}
Differentiating $g^{-1}g=I$ gives
\begin{equation}
 (\partial_i g^{-1})g+g^{-1}(\partial_i g)=0.
\end{equation}
Multiplication on the right by $g^{-1}$ proves
\begin{equation}
 \partial_i g^{-1}=-g^{-1}(\partial_i g)g^{-1}.
 \label{eq:inverse-derivative}
\end{equation}
Define the geometric force by the sign convention
\begin{align}
 F_{\mathrm{geo},i}
 &:=\tfrac12\mom^{\T}(\partial_i g^{-1})\mom\\
 &=-\tfrac12\vel^{\T}(\partial_i g)\vel.
 \label{eq:geoforce}
\end{align}
The actual geometric contribution to $\dot\mom$ is $-\Fgeo$.
Thus \eqref{eq:canonical-force} is $\dot\mom=-\grad L-\Fgeo$. For a constant metric, all
components of $\Fgeo$ vanish.
\end{proof}
\begin{remark}[Sign convention for $\Fgeo$]\label{rem:fgeo-sign}
The sign is chosen so that $\Fgeo$ plays the same role for the kinetic energy
that $\grad L$ plays for the potential: at fixed momentum,
$F_{\mathrm{geo},i}=\partial_i\big(\tfrac12\mom^{\T}g^{-1}\mom\big)$, so
$\dot\mom=-\partial_{\tht}H_0=-\grad L-\Fgeo$, and both enter as gradients to
be subtracted. The physical force exerted by the metric on $\dot\mom$ is
therefore $-F_{\mathrm{geo},i}=+\tfrac12\vel^{\T}(\partial_i g)\vel$. This is
the positive term expected from the Lagrangian picture, where
$\dot\mom=\partial_{\tht}\mathcal L=\tfrac12\vel^{\T}(\partial g)\vel-\grad L$.
The two expressions differ in sign only because $\partial_{\tht}$ acts at fixed
$\mom$ in the Hamiltonian picture and at fixed $\vel$ in the Lagrangian one:
$\partial_i T|_{\mom}=-\partial_i T|_{\vel}$ by \eqref{eq:inverse-derivative}.
The scalar check of Section~\ref{sec:scalar-check} confirms the sign.
\end{remark}
\subsubsection{Connection to the Christoffel Terms}
% ------------------------------------------------------------
Differentiate $p_i=\sum_jg_{ij}v_j$ along the trajectory:
\begin{equation}
 \dot p_i=\sum_jg_{ij}\dot v_j
 +\sum_{j,k}(\partial_k g_{ij})v_kv_j.
\end{equation}
Equating this with \eqref{eq:canonical-force} and using
\eqref{eq:geoforce} gives
\begin{equation}
 \sum_jg_{ij}\ddot\theta_j+
 \sum_{j,k}\Gamma_{ijk}v_jv_k+\partial_iL=0,
 \label{eq:lower-christoffel}
\end{equation}
where the symmetry of $v_jv_k$ permits the definition
\begin{equation}
 \Gamma_{ijk}=\tfrac12
 (\partial_jg_{ik}+\partial_kg_{ij}-\partial_ig_{jk}).
\end{equation}
Writing $g^{ai}=[g^{-1}]_{ai}$ and
$\Gamma^a{}_{jk}=\sum_i g^{ai}\Gamma_{ijk}$, multiplication by $g^{-1}$ yields
\begin{equation}
 \ddot\theta_a+\sum_{j,k}\Gamma^a{}_{jk}\dot\theta_j\dot\theta_k
 =-\sum_i g^{ai}\partial_iL.
 \label{eq:geodesic}
\end{equation}
Thus the canonical geometric term generates the connection correction after
converting momentum dynamics to acceleration. If $L$ is constant,
\eqref{eq:geodesic} is the geodesic equation; numerically integrating such
equations while conserving the underlying Hamiltonian, rather than
approximating them by discretized length minimization, is addressed
directly by~\cite{geodesics}.
% ------------------------------------------------------------
\subsubsection{Coordinate Dependence}
% ------------------------------------------------------------
For an orthogonal change $\tht=Q\vect z$, $Q^{\T} Q=I$, the transformed metric
is $g_z=Q^{\T} g_\theta Q$, and its eigenvalues are unchanged. The kinetic energy is therefore rotation
invariant.
Under a general coordinate change with Jacobian $J$, a metric transforms by
$g_z=J^{\T} g_\theta J$, which generally changes its eigenvalues. Moreover, an
ordinary Hessian has an additional coordinate-transformation term away from
stationary points, and stored gradients at different points require a transport
rule on a general manifold. Accordingly, the Hessian and metric
constructions here are specified in a fixed parameter chart; arbitrary nonlinear
reparameterization invariance is not asserted.
% ------------------------------------------------------------s
\subsubsection{A Complete Scalar Check}\label{sec:scalar-check}
% ------------------------------------------------------------
Take $L(\theta)=\theta^2/2$ and $g(\theta)=1+\theta^2$. Then
\begin{align}
 H_0&=\frac{p^2}{2(1+\theta^2)}+\frac{\theta^2}{2},\\
 \dot\theta&=\frac{p}{1+\theta^2},\qquad
 F_{\mathrm{geo}}=-\frac{\theta p^2}{(1+\theta^2)^2},\\
 \dot p&=-\theta+\frac{\theta p^2}{(1+\theta^2)^2}.
 \label{eq:scalar-check}
\end{align}
Since $p=(1+\theta^2)\dot\theta$, the same dynamics give
\begin{equation}
 (1+\theta^2)\ddot\theta+\theta\dot\theta^2+\theta=0.
\end{equation}
This verifies the inverse-metric derivative, force sign, and Christoffel form.
%=====================================================
\section{Dissipative dynamics and Energy evolution}\label{Dissipative dynamics and Energy evolution}
\subsection{Rayleigh Dissipation and Energy theorem}
\label{sec:energy-section}
\subsubsection{Why Conservative Motion Is Insufficient}
% ------------------------------------------------------------
For constant mass and $L(\theta)=a\theta^2/2$, $a>0$, the conservative equation
is $m\ddot\theta+a\theta=0$. Nontrivial solutions oscillate about the minimum
without losing energy. This motivates damping: optimization need not require
the loss itself to decrease at every instant, but persistent conservative
oscillations generally prevent settling.
In one dimension, a viscous force $-c\dot\theta$ is generated by
\begin{equation}
 \mathcal R=\tfrac12c\dot\theta^2,\qquad
 Q_{\mathrm{diss}}=-\partial_{\dot\theta}\mathcal R.
\end{equation}
The Rayleigh function describes dissipation through its velocity derivative.
\subsubsection{Metric-Compatible Dissipation}
% ------------------------------------------------------------
For the metric model choose
\begin{equation}
 \mathcal R(\tht,\vel)=\frac{\gamma}{2}\vel^{\T} g(\tht)\vel,
 \qquad\gamma>0.
 \label{eq:rayleigh}
\end{equation}
The force is
\begin{equation}
 \vect Q_{\mathrm{diss}}=-\partial_{\vel}\mathcal R
 =-\gamma g\vel=-\gamma\mom.
 \label{eq:damping-force}
\end{equation}
Adding its virtual work to stationary action gives the forced
Euler--Lagrange equation
\begin{equation}
 \frac{\mathrm d}{\mathrm dt}\partial_{\vel}\mathcal L
 -\partial_{\tht}\mathcal L=-\partial_{\vel}\mathcal R.
\end{equation}
The Legendre transform still defines $H_0$, so
\begin{equation}
 \boxed{\begin{aligned}
 \dot\tht&=g^{-1}\mom,\\
 \dot\mom&=-\grad L-\Fgeo-\gamma\mom.
 \end{aligned}}
 \label{eq:damped}
\end{equation}
The acceleration form acquires $+\gamma\dot\theta_a$ on the left of
\eqref{eq:geodesic}. When $g=m$ is scalar and constant, $c=\gamma m$.
Writing $\gamma\mom^{\T}\mom/2$ and differentiating with respect to momentum
can reproduce $-\gamma\mom$ algebraically, but that expression is not the
velocity-based Rayleigh construction for a curved metric. In momentum
coordinates, \eqref{eq:rayleigh} equals
$\gamma\mom^{\T} g^{-1}\mom/2$; its velocity derivative is what enters the force.

\subsection{Energy Dissipation relation}
% ------------------------------------------------------------
\begin{theorem}[energy dissipation]\label{thm:energy}
For a smooth, time-independent metric and loss, solutions of
\eqref{eq:damped} satisfy
\begin{equation}
 \dot H_0=-\gamma\mom^{\T} g^{-1}\mom
 =-\gamma\vel^{\T} g\vel=-2\mathcal R\leq0.
 \label{eq:energy-decay}
\end{equation}
\end{theorem}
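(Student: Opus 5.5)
The approach is a direct chain-rule computation of $\dot H_0$ along a solution of \eqref{eq:damped}, treating $H_0(\tht,\mom)=\tfrac12\mom^{\T}g^{-1}(\tht)\mom+L(\tht)$ as a function on phase space. Because $g$ and $L$ are smooth and time independent (Assumption~\ref{ass:core}), $H_0$ has no explicit time dependence. We therefore write
\[
 \dot H_0=(\partial_{\tht}H_0)^{\T}\dot\tht+(\partial_{\mom}H_0)^{\T}\dot\mom ,
\]
with both partial derivatives taken in canonical coordinates, so that $\tht$ is differentiated at fixed $\mom$.

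Next we identify the two partial derivatives from earlier results. By \eqref{eq:H0}, $\partial_{\mom}H_0=g^{-1}\mom$. By the geometric-force lemma and Remark~\ref{rem:fgeo-sign}, $\partial_{\tht}H_0=\grad L+\Fgeo$, since $F_{\mathrm{geo},i}=\partial_i\big(\tfrac12\mom^{\T}g^{-1}\mom\big)$ at fixed $\mom$.

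We then substitute the boxed equations $\dot\tht=g^{-1}\mom$ and $\dot\mom=-\grad L-\Fgeo-\gamma\mom$:
\[
 \dot H_0=(\grad L+\Fgeo)^{\T}g^{-1}\mom+(g^{-1}\mom)^{\T}\big(-\grad L-\Fgeo-\gamma\mom\big).
\]
The conservative terms cancel pairwise. This is just the statement that $\{H_0,H_0\}=0$, and it uses the symmetry of $g^{-1}$. What remains is $-\gamma\mom^{\T}g^{-1}\mom$.

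Converting to velocity form with $\mom=g\vel$ from \eqref{eq:momentum} gives $\mom^{\T}g^{-1}\mom=\vel^{\T}g\vel$. By \eqref{eq:rayleigh}, $\gamma\vel^{\T}g\vel=2\mathcal R$. Finally, $g$ is positive definite and $\gamma>0$, so $\vel^{\T}g\vel\ge0$, with equality only when $\vel=0$. This yields \eqref{eq:energy-decay}.

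The only delicate step is the sign and coordinate bookkeeping in $\partial_{\tht}H_0$. One must differentiate at fixed $\mom$, not at fixed $\vel$; otherwise the geometric-force terms appear with the wrong sign and fail to cancel. Remark~\ref{rem:fgeo-sign} already settles this.

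As a cross-check, we would redo the computation in Lagrangian form. Write $E=\tfrac12\vel^{\T}g\vel+L$ and use the damped geodesic equation, i.e.\ \eqref{eq:geodesic} with $+\gamma\dot\theta_a$ added. The Christoffel terms cancel against $\tfrac12\vel^{\T}\dot g\vel$ through the identity $\sum_{ijk}\Gamma_{ijk}v_iv_jv_k=\tfrac12\sum_{ijk}(\partial_kg_{ij})v_iv_jv_k$, which confirms the same result independently.
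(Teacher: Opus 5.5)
Your proposal is correct and follows the paper's proof: apply the chain rule to $H_0$ in canonical coordinates, substitute \eqref{eq:damped} so the Hamiltonian terms cancel and leave $-\gamma\vel^{\T}\mom$, then use $\mom=g\vel$ and positive definiteness, with the Lagrangian cross-check as a valid extra. One small correction: the pairwise cancellation needs only $x^{\T}y=y^{\T}x$ for vectors, and the symmetry of $g^{-1}$ is really used earlier, to obtain $\partial_{\mom}H_0=g^{-1}\mom$.
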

\begin{proof}
Apply the chain rule and substitute the canonical equations:
\begin{align}
 \dot H_0&=(\partial_{\tht} H_0)^{\T}\dot\tht
 +(\partial_{\mom} H_0)^{\T}\dot\mom\\
 &=(\partial_{\tht} H_0)^{\T}\vel
 +\vel^{\T}(-\partial_{\tht} H_0-\gamma\mom)\\
 &=-\gamma\vel^{\T}\mom.
\end{align}
The Hamiltonian terms cancel. Substitute \eqref{eq:momentum} and use positive
definiteness.
\end{proof}
The loss derivative $\dot L=\grad L^{\T}g^{-1}\mom$ may have either sign.
Thus $\dot H_0\leq0$ does not imply $\dot L\leq0$.
\begin{corollary}[Damped Noether balance]\label{cor:noether-damped}
Solutions of \eqref{eq:damped} satisfy
\begin{equation}
\dot J_\xi=\tfrac12\vel^{\T}(\mathfrak L_\xi g)\vel-\xi^{\T}\grad L-\gamma J_\xi .
\label{eq:noether-damped}
\end{equation}
If $\xi$ is a Killing field of $g$ and a symmetry of the loss,
$J_\xi(t)=e^{-\gamma t}J_\xi(0)$.
\end{corollary}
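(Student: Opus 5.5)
The plan is to repeat the product-rule computation from the proof of Theorem~\ref{thm:noether}, now using the damped momentum law \eqref{eq:damped} in place of the conservative one. Along a solution, $J_\xi=\mom^{\T}\xi(\tht)$, so
\[
\dot J_\xi=\dot\mom^{\T}\xi+\mom^{\T}(D\xi)\dot\tht
=\dot\mom^{\T}\xi+\mom^{\T}(D\xi)\vel,
\]
where the second equality uses the first line of \eqref{eq:damped}, $\dot\tht=g^{-1}\mom=\vel$.

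Next I will substitute the second line of \eqref{eq:damped}, $\dot\mom=-\grad L-\Fgeo-\gamma\mom$. This splits $\dot J_\xi$ into three groups of terms.

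The damping term gives $-\gamma\mom^{\T}\xi=-\gamma J_\xi$ directly.

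The gradient term gives $-\xi^{\T}\grad L$.

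The remaining terms are $-\Fgeo^{\T}\xi+\mom^{\T}(D\xi)\vel$.
\begin{itemize}
\item By the lemma on the inverse metric, $-F_{\mathrm{geo},k}=\tfrac12\vel^{\T}(\partial_k g)\vel$ when expressed in velocity variables (see Remark~\ref{rem:fgeo-sign}). Hence $-\Fgeo^{\T}\xi=\tfrac12\sum_k\xi_k\,\vel^{\T}(\partial_k g)\vel$.
\item Symmetrizing the quadratic form, $\mom^{\T}(D\xi)\vel=\vel^{\T}g(D\xi)\vel=\tfrac12\vel^{\T}\big(gD\xi+D\xi^{\T}g\big)\vel$.
\end{itemize}
Adding these two pieces and comparing with \eqref{eq:lie-derivative} should give exactly $\tfrac12\vel^{\T}(\mathfrak L_\xi g)\vel$. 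This would establish \eqref{eq:noether-damped}.

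The only delicate point is the sign and representation of $\Fgeo$. The Hamiltonian derivative is taken at fixed $\mom$, while the Lie-derivative formula is written in $\vel$. I will therefore convert with \eqref{eq:inverse-derivative} before combining terms, and check the index placement in $g_{kj}\partial_i\xi_k$ against $(gD\xi)_{ij}$. Equivalently, I can note that the damped system is the conservative Euler--Lagrange flow with $\dot\mom$ shifted by $-\gamma\mom$. Theorem~\ref{thm:noether} then supplies every term except $-\gamma\mom^{\T}\xi$, so the identity follows with almost no new computation.

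For the final claim, assume $\mathfrak L_\xi g=0$ and $\xi^{\T}\grad L\equiv0$. Then \eqref{eq:noether-damped} reduces to the linear scalar ODE $\dot J_\xi=-\gamma J_\xi$ along the trajectory, with $\gamma$ constant by Assumption~\ref{ass:core}. Its unique solution is $J_\xi(t)=e^{-\gamma t}J_\xi(0)$, which can be confirmed by differentiating $e^{\gamma t}J_\xi(t)$ and observing that the derivative is zero.
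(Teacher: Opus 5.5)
Your proposal is correct and is essentially the paper's proof: the paper reruns the product-rule computation of Theorem~\ref{thm:noether} with the forced law $\dot{\mom}=\partial_{\tht}\mathcal L-\gamma\mom$. Your substitution of $-\grad L-\Fgeo$ with $-F_{\mathrm{geo},k}=\tfrac12\vel^{\T}(\partial_k g)\vel$ is that same law written out, and the only new term is $-\gamma\mom^{\T}\xi=-\gamma J_\xi$. One wording point: what carries over to damped trajectories is the proof of Theorem~\ref{thm:noether}, not its statement (which concerns undamped Euler--Lagrange solutions), and that is exactly what your explicit computation does.
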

\begin{proof}
Repeat the proof of Theorem~\ref{thm:noether} with the forced equation
$\dot{\mom}=\partial_{\tht}\mathcal L-\gamma\mom$; the extra term is
$-\gamma\mom^{\T}\xi=-\gamma J_\xi$.
\end{proof}
Dissipation does not break a symmetry. It drains the symmetry's charge at
the same rate $\gamma$ at which it drains momentum, whereas the energy, the
time-translation charge, decreases at the state-dependent rate
$2\mathcal R$.

\subsection{Conditional Stationary-Set Convergence}
% ------------------------------------------------------------
\begin{theorem}[Approach to the stationary set]\label{thm:stationary}
Under Assumption~\ref{ass:core}, let $K\subset\Omega\times\R^n$ be compact
and positively invariant for \eqref{eq:damped}, with $g\succeq mI$,
$m>0$, on its parameter projection. Every solution starting in $K$ obeys
\[
 \operatorname{dist}\big((\tht(t),\mom(t)),\mathcal Z_K\big)\longrightarrow0,
\]
\[
 \mathcal Z_K=\{(\tht,0)\in K:\grad L(\tht)=0\}.
\]
\end{theorem}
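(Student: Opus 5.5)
The plan is a LaSalle invariance argument with $H_0$ as the Lyapunov function.

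\emph{Step 1: limit sets.} By Assumption~\ref{ass:core}, the vector field of \eqref{eq:damped} is locally Lipschitz. Hence solutions exist and are unique. Since $K$ is compact and positively invariant, every solution starting in $K$ exists for all $t\ge0$ and stays in $K$. Its $\omega$-limit set $\omega$ is then nonempty, compact, contained in $K$ (because $K$ is closed), and invariant under the flow. Invariance uses continuous dependence on initial data. Backward invariance is only meant within $\omega$, via limits of the trajectory, which suffices. Standard topology also gives $\operatorname{dist}((\tht(t),\mom(t)),\omega)\to0$.

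\emph{Step 2: $H_0$ is constant on $\omega$.} Theorem~\ref{thm:energy} shows that $H_0$ is nonincreasing along the solution. $H_0$ is continuous on the compact set $K$ and hence bounded below, so $H_0(\tht(t),\mom(t))\to H_\infty$. By continuity, $H_0\equiv H_\infty$ on $\omega$. Because $\omega$ is invariant, any solution through a point of $\omega$ stays in $\omega$. Along it $\dot H_0=0$, so \eqref{eq:energy-decay} gives $\gamma\mom^{\T}g^{-1}\mom=0$. Since $g^{-1}\succ0$ and $\gamma>0$, this forces $\mom\equiv0$ on that solution.

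\emph{Step 3: the largest invariant set in $\{\mom=0\}$.} On such a solution $\mom(t)\equiv0$, so $\dot\mom\equiv0$. The geometric force $\Fgeo$ is quadratic in $\mom$ and vanishes, as does the damping term. The second equation of \eqref{eq:damped} then gives $0=-\grad L(\tht)$, and $\dot\tht=g^{-1}\mom=0$. Thus every point of $\omega$ has the form $(\tht,0)$ with $\grad L(\tht)=0$, that is, $\omega\subseteq\mathcal Z_K$. Combined with $\operatorname{dist}(\cdot,\omega)\to0$, this yields the claim.

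\emph{Role of $g\succeq mI$ and main obstacle.} The bound $g\succeq mI$ is not strictly needed for the qualitative argument, since positive definiteness plus compactness already give uniform bounds. It could instead be used to make Step 2 quantitative: $\int_0^\infty\gamma\mom^{\T}g^{-1}\mom\,dt\le H_0(0)-H_\infty<\infty$, together with a uniform upper bound on $g$ over compact $K$. I expect the main care point to be the rigorous invariance of $\omega$ and the passage from ``$\dot H_0=0$ along an invariant trajectory'' to ``$\mom\equiv0$''. This requires working with full solutions lying in $\omega$ rather than pointwise, because $\dot H_0=0$ at a single point says nothing about $\grad L$. Following the trajectory is what converts $\mom\equiv0$ into $\grad L=0$ through the momentum equation.
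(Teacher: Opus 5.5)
Your proposal is correct and follows the paper's own route. The paper uses $H_0$ as a Lyapunov function via Theorem~\ref{thm:energy}, applies LaSalle's invariance principle to reduce to the largest invariant subset of $K\cap\{\mom=0\}$, and uses the momentum equation to force $\grad L=0$ there; you simply unpack LaSalle through the $\omega$-limit set instead of citing it, and your remark that pointwise positive definiteness suffices without $g\succeq mI$ matches what the paper's proof actually uses.
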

\begin{proof}
Compact invariance ensures forward existence. The energy theorem and
LaSalle's invariance principle~\cite{lasalle,khalil-invariance} imply approach to
the largest invariant subset of $K\cap\{\dot H_0=0\}$. Positive definiteness
makes this zero-derivative set $K\cap\{\mom=0\}$. There,
$\dot\tht=0$ and $\dot\mom=-\grad L$, so precisely the points of
$\mathcal Z_K$ can remain in it.
\end{proof}
This is a set-convergence statement.
\begin{remark}[Compactness is a hypothesis]\label{rem:compactness}
Compactness of $K$ is assumed, not derived, and it can fail when the loss has
noncompact symmetries. For the MPS parameterization of Section~\ref{sec:mps},
the gauge transformation $A^{(k)}\mapsto\lambda A^{(k)}$,
$A^{(k+1)}\mapsto\lambda^{-1}A^{(k+1)}$ leaves the image, and hence $L$,
unchanged for every $\lambda>0$, so sublevel sets of $L$ are unbounded along
gauge orbits as $\lambda\to\infty$. A compact positively invariant $K$ is then
not guaranteed, and Theorem~\ref{thm:stationary} is not invoked for the MPS
experiments; their convergence is reported numerically only. Gauge fixing, for
example a canonical MPS form, or a gauge-breaking penalty would be needed to
restore compactness and is not pursued here.
\end{remark}
% ------------------------------------------------------------
\subsection{The Overdamped Limit}
% ------------------------------------------------------------
Put $\varepsilon=\gamma^{-1}$, use slow time $s=\varepsilon t$, and write
$\mom=\varepsilon\vect q$. Because $\Fgeo$ is quadratic in momentum,
\begin{equation}
 \begin{split}
 \frac{\dd\tht}{\dd s}&=g^{-1}\vect q,\\
 \varepsilon^2\frac{\dd\vect q}{\dd s}
 &=-\grad L-\vect q-\varepsilon^2\Fgeo(\tht,\vect q).
 \end{split}
\end{equation}
For bounded smooth coefficients and after the fast initial relaxation,
the leading balance gives $\vect q=-\grad L$. Hence
\begin{equation}
 \frac{\dd\tht}{\dd s}=-g^{-1}\grad L,
 \qquad \dot\tht\simeq-\gamma^{-1}g^{-1}\grad L.
 \label{eq:overdamped}
\end{equation}
The slow-time equation is metric gradient flow; $g=I$ recovers ordinary
gradient flow. This is a formal singular-limit calculation.
\subsection{Adaptive Metric Construction}
% ============================================================
\subsubsection{Hessian Regularization}
% ------------------------------------------------------------
For the quadratic loss $L=\theta_1^2/2+2\theta_2^2$,
\begin{equation}
 B_L=\begin{bmatrix}1&0\\0&4\end{bmatrix},\qquad
 B_L^{-1}\grad L=\begin{bmatrix}\theta_1\\\theta_2\end{bmatrix}.
\end{equation}
The inverse Hessian balances unequal curvature. For a nonconvex loss, however,
a positive scalar shift chosen without reference to $\lambda_{\min}$ does not
make the Hessian positive definite: positive definiteness of $C+\sigma I$
requires $\sigma>-\lambda_{\min}(C)$, so the shift must be curvature dependent.
Given a numerical curvature estimate $B$, define
\begin{align}
 C&=\tfrac12(B+B^{\T}),\\
 \lambda_{\mathrm{eff}}
 &=\delta+\max\{0,-\lambda_{\min}(C)\},\qquad\delta>0,\\
 g_0&=C+\lambda_{\mathrm{eff}}I.
 \label{eq:regularization}
\end{align}
Every eigenvalue of $g_0$ is at least $\delta$. Symmetrization also has an
algebraic justification: if $A=-A^{\T}$, then
$\vect x^{\T} A\vect x=-\vect x^{\T} A\vect x=0$.
The adaptive shift in \eqref{eq:regularization} is only piecewise smooth. At
an eigenvalue crossing or at the switching point of the maximum, ordinary
metric derivatives may not exist. The smooth derivation applies on smooth
branches. A fixed shift valid over the region of interest, or a smooth
positive-definite construction, avoids this issue. When the shift varies
smoothly with $\tht$, its derivative must be included in $\partial_i g$.
\begin{proposition}[Positive-definite curvature shift]
For symmetric $C$ and $\delta>0$, the metric in \eqref{eq:regularization}
satisfies $g_0\succeq\delta I$.
\end{proposition}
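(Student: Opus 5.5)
The plan is to diagonalize $C$ and read the eigenvalues of $g_0$ off directly. Since $C=\tfrac12(B+B^{\T})$ is real symmetric, the spectral theorem gives an orthogonal $Q$ and real eigenvalues $\lambda_1,\dots,\lambda_n$ with $C=Q\diag(\lambda_1,\dots,\lambda_n)Q^{\T}$. Because $\lambda_{\mathrm{eff}}I=Q(\lambda_{\mathrm{eff}}I)Q^{\T}$, we have $g_0=Q\diag(\lambda_i+\lambda_{\mathrm{eff}})Q^{\T}$. So $g_0$ is symmetric, and its eigenvalues are exactly $\lambda_i+\lambda_{\mathrm{eff}}$. The claim $g_0\succeq\delta I$ then reduces to the scalar inequality $\lambda_i+\lambda_{\mathrm{eff}}\ge\delta$ for every $i$, which is equivalent to $\lambda_{\min}(C)+\lambda_{\mathrm{eff}}\ge\delta$.

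For that inequality I will split on the sign of $\lambda_{\min}(C)$, following the maximum in \eqref{eq:regularization}. If $\lambda_{\min}(C)\ge0$, then $\lambda_{\mathrm{eff}}=\delta$ and $\lambda_{\min}(C)+\delta\ge\delta$. If $\lambda_{\min}(C)<0$, then $\lambda_{\mathrm{eff}}=\delta-\lambda_{\min}(C)$, so $\lambda_{\min}(C)+\lambda_{\mathrm{eff}}=\delta$. Both cases can be written at once as $\lambda_{\min}(C)+\max\{0,-\lambda_{\min}(C)\}=\max\{\lambda_{\min}(C),0\}\ge0$.

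To state the conclusion in the quadratic-form sense of \eqref{eq:spd}, for any $\vect x$ I will set $\vect y=Q^{\T}\vect x$ and write $\vect x^{\T}g_0\vect x=\sum_i(\lambda_i+\lambda_{\mathrm{eff}})y_i^2\ge\delta\|\vect y\|^2=\delta\|\vect x\|^2$, using that $Q$ is orthogonal. An equivalent route avoids diagonalization and uses the Rayleigh quotient bound $\vect x^{\T}C\vect x\ge\lambda_{\min}(C)\|\vect x\|^2$ directly. Since $\delta>0$, this also gives strict positive definiteness.

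There is no real obstacle here. The only points needing care are that the proposition takes $C$ symmetric, so its eigenvalues are real and $\lambda_{\min}$ is well defined, and that the case split on the maximum is handled correctly. I would add a brief remark that the bound holds pointwise for each $\tht$ and needs no smoothness of $\lambda_{\min}$. This matters because, as noted in Remark~\ref{rem:scope}, the shift is only piecewise smooth.
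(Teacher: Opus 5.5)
Your proof is correct and follows the paper's argument. The paper observes in one line that the least eigenvalue of $g_0$ is $\lambda_{\min}(C)+\delta+\max\{0,-\lambda_{\min}(C)\}\ge\delta$; your spectral decomposition, case split, and quadratic-form bound make the same step explicit.
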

\begin{proof}
Its least eigenvalue is
$\lambda_{\min}(C)+\delta+\max\{0,-\lambda_{\min}(C)\}\geq\delta$.
\end{proof}
\subsubsection{Metric Resolution Hierarchy}
\label{sec:metric-hierarchy}
% ------------------------------------------------------------
Metric construction is independent of the variational derivation. A
regularized Fisher metric uses $g_0=F+\delta I$ when $F\succeq0$;
Hessian metrics require the shift or another SPD construction above.
Structured choices trade curvature resolution for storage and solve cost:
\begin{equation}
 g_{\mathrm{full}},\quad g_{\mathrm{block}},\quad g_{\mathrm{Kron}},
 \quad g_{\mathrm{diag}},\quad g_{\mathrm{low\text{-}rank}}.
\end{equation}
Block and Kronecker factors must give an SPD operator. A low-rank-plus-diagonal
choice $g=D+VV^{\T}$ is SPD for positive diagonal $D$.
Every representation retains the relation $g\vel=\mom$, but differentiating
a position-dependent representation is part of its geometric force.

\section{Discrete Dynamics and Stability} \label{Discrete Dynamics and Stability and Phase analysis}
% ============================================================
% ------------------------------------------------------------
\subsection{Canonical Momentum-First Discretization}
% ------------------------------------------------------------
Let $h>0$ and evaluate the force at $(\tht_k,\mom_k)$.
Define
\begin{equation}
 \vect f_k=\vect\ell_k+\Fgeo_k,\qquad\vect\ell_k=\nabla L(\tht_k).
\end{equation}
Forward Euler applied to the momentum equation gives
\begin{align}
 \frac{\mom_{k+1}-\mom_k}{h}&=-\vect f_k-\gamma\mom_k,\\
 \mom_{k+1}&=(1-\gamma h)\mom_k-h\vect f_k.
\end{align}
With $\beta=1-\gamma h$, update position using the new momentum:
\begin{equation}
 \boxed{\begin{aligned}
 \mom_{k+1}&=\beta\mom_k-h\vect f_k,\\
 \tht_{k+1}&=\tht_k+h\,g_k(\tht_k)^{-1}\mom_{k+1}.
 \end{aligned}}
 \label{eq:master}
\end{equation}
The usual range $0\leq\beta<1$ corresponds to $0<\gamma h\leq1$ in this
Euler interpretation. Solving only $\dot\mom=-\gamma\mom$ exactly instead
gives retention $e^{-\gamma h}$. Composing exact damping with a force step
has the same first-order limit but a different splitting convention.
For a frozen constant force, exact integration of the combined momentum
equation would replace $h\vect f_k$ by
$(1-e^{-\gamma h})\vect f_k/\gamma$.
Using a single symbol $\eta$ for both appearances of $h$ in
\eqref{eq:master} is a common convention that can obscure this scaling: if
$\beta=0$, the force is just the gradient, and the metric
is locally frozen, then
\begin{equation}
 \tht_{k+1}=\tht_k-h^2g_k^{-1}\grad L(\tht_k).
 \label{eq:h-squared}
\end{equation}
Its effective gradient-step size is $h^2$, not $h$.
If separate momentum and position step factors are used, their product
replaces $h^2$. Holding $\beta<1$ fixed while $h\to0$ also makes the
Euler-equivalent damping rate $(1-\beta)/h$ diverge; this is not a
fixed-damping continuous-time limit.

\subsection{Quadratic Stability}

To understand the finite-step behavior of HAMLET, consider the quadratic loss
\[
L(\tht)=\frac{1}{2}\tht^{\T}A\tht,
\qquad A\succ0,
\]
with a constant positive-definite metric \(g\succ0\). With no additional
forces, the discrete update reduces to
\[
\tht_{k+1}
=
\big[(1+\beta)I-h^2g^{-1}A\big]\tht_k
-
\beta\tht_{k-1}.
\]

Because \(g^{-1}A\) is similar to the symmetric positive-definite matrix
\(g^{-1/2}Ag^{-1/2}\), the dynamics can be analyzed independently along its
eigendirections. If \(a>0\) denotes one such eigenvalue, the corresponding
scalar mode satisfies
\[
x_{k+1}
=
(1+\beta-h^2a)x_k-\beta x_{k-1}.
\]
The characteristic equation is therefore
\[
z^2-(1+\beta-h^2a)z+\beta=0.
\]

For \(0\leq\beta<1\), both roots lie inside the unit circle precisely when
\[
0<h^2a<2(1+\beta).
\]
Hence the full quadratic system is asymptotically stable when
\[
\boxed{
0<h^2\lambda_i(g^{-1}A)<2(1+\beta)
\qquad\text{for all }i.
}
\]

This expression makes the role of the metric explicit. The relevant
curvature is not the spectrum of \(A\) alone, but that of the transformed
operator \(g^{-1}A\). In particular, if the metric is chosen as the exact
curvature,
\[
g=A,
\]
then
\[
g^{-1}A=I,
\]
and every mode obeys the same recurrence,
\[
x_{k+1}
=
(1+\beta-h^2)x_k-\beta x_{k-1}.
\]
The original curvature imbalance is therefore removed from the modal
dynamics.

For a regularized metric \(g=A+\delta I\), the transformed eigenvalues become
\[
\frac{\lambda_i(A)}{\lambda_i(A)+\delta},
\]
so the regularization controls how completely the curvature is normalized.

The same recurrence also gives the convergence rate. Let
\[
\mu=\lambda_{\min}(g^{-1}A),
\qquad
\Lambda=\lambda_{\max}(g^{-1}A),
\qquad
\kappa=\frac{\Lambda}{\mu}.
\]
Using the classical heavy-ball choice~\cite{polyak1964}
\[
h^2=\frac{4}{(\sqrt{\Lambda}+\sqrt{\mu})^2},
\qquad
\beta=
\left(
\frac{\sqrt{\Lambda}-\sqrt{\mu}}
{\sqrt{\Lambda}+\sqrt{\mu}}
\right)^2,
\]
every mode has roots of modulus $\sqrt\beta$, so the spectral radius of the
recurrence, i.e.\ its asymptotic per-step convergence rate, is
\[
\sqrt{\beta}
=
\frac{\sqrt{\kappa}-1}
{\sqrt{\kappa}+1}.
\]
This is a spectral radius, not a one-step contraction factor in norm: the
companion matrix of the recurrence is not normal, and at the extreme modes
$a=\mu,\Lambda$ its roots coincide, so $|x_k|$ decays as
$O(k\,\beta^{k/2})$ and may grow transiently before contracting.

Thus, for this quadratic model, the convergence rate depends on the condition
number of the transformed operator \(g^{-1}A\). Exact curvature gives
\(\kappa=1\), so all modes contract at the same rate. This is the basic
mechanism behind the curvature normalization observed later in the image
reconstruction example of Section~\ref{Controlled Image Reconstruction Study}.
These results apply to the constant-metric quadratic problem. When the metric
depends on position or changes during training, the update is no longer a
fixed linear recurrence, so the same stability condition does not apply
directly.

\section{MNIST Classification Benchmark}
\label{sec:mnist}
The image-reconstruction study in the following sections is deliberately
controlled. Before turning to it, we check that the HAMLET update also
behaves well on a standard learning task by training a small classifier on
MNIST handwritten digits~\cite{mnist}.

Each optimizer trains the same two-layer perceptron,
$784\rightarrow128\rightarrow10$, with a ReLU hidden layer and a softmax
cross-entropy loss. We use a fixed split of 55{,}000 training, 5{,}000
validation, and the official 10{,}000 test images, and train for 10 epochs
with batch size 128, or 4{,}300 updates. The comparison is paired: within
each of five seeds, every optimizer starts from the same initialization and
sees the same minibatch order. Hyperparameters were tuned on a separate seed
using the final validation negative log-likelihood (NLL), and the test set
played no part in that choice.

We compare a single HAMLET formulation, the core dynamics derived above,
against two standard baselines: SGD with momentum, and Adam with decoupled
weight decay (AdamW)~\cite{adamw}.For HAMLET we use a
diagonal metric built from a running average of squared gradients,
\begin{equation}
\begin{aligned}
\vect\nu_{k+1}&=\rho\,\vect\nu_k+(1-\rho)\,(\grad L_k\odot\grad L_k),\\
g_k&=\diag\!\big(\sqrt{\vect\nu_{k+1}}+\epsilon_g\big).
\end{aligned}
\label{eq:mnist-metric}
\end{equation}
Here $k$ is the training step, $\grad L_k$ the current minibatch gradient,
and $\vect\nu_k$ a running average of past squared gradients, which records
the recent gradient magnitude of every parameter. The decay coefficient
$\rho=0.99$ sets how much of the old average is retained, the constant
$\epsilon_g=10^{-3}$ ensures numerical stability, and $\diag$ places the
elementwise square roots on the diagonal of the metric $g_k$. To keep the method scalable, this
metric is frozen during each optimization step, so $\partial_{\tht}g_k=0$ and
the geometric force vanishes. The benchmark therefore tests the
metric--momentum structure of HAMLET rather than its full position-dependent
dynamics, which are studied on the MPS manifold in Section~\ref{sec:mps}.

Table~\ref{tab:mnist-main} and Fig.~\ref{fig:mnist-comparison} report the
results. HAMLET gives the highest mean test accuracy, $97.95\pm0.09\%$, and
the lowest mean test NLL, $0.0696\pm0.0028$. SGD with momentum reaches
$97.87\pm0.21\%$ and $0.0748\pm0.0070$, while AdamW reaches
$97.28\pm0.24\%$ and $0.1252\pm0.0169$. HAMLET also has the smallest spread
across seeds on every metric. Its margin over SGD with momentum is small,
however: on both test metrics the difference lies within one standard
deviation, and SGD with momentum attains a slightly lower validation NLL. Hence we conclude under
the same fixed tuning budget, the HAMLET optimizertrains a standard neural
classifier reliably and reaches competitive or better predictive metrics.

\begin{table*}[!t]
\caption{MNIST MLP results over five paired evaluation seeds. Entries are
mean $\pm$ sample standard deviation; accuracy is in percent. Lower NLL is
better; higher accuracy is better. Bold marks the best mean in each column.}
\label{tab:mnist-main}
\centering
\begin{tabular}{@{}lccccc@{}}
\toprule
Method & $n$ & Test accuracy (\%) & Test NLL & Best val.\ NLL & Final val.\ NLL\\
\midrule
SGD+momentum & 5 & $97.87\pm0.21$ & $0.0748\pm0.0070$ & $\mathbf{0.0670\pm0.0045}$ & $\mathbf{0.0742\pm0.0104}$\\
AdamW        & 5 & $97.28\pm0.24$ & $0.1252\pm0.0169$ & $0.0851\pm0.0025$ & $0.1241\pm0.0133$\\
HAMLET       & 5 & $\mathbf{97.95\pm0.09}$ & $\mathbf{0.0696\pm0.0028}$ & $0.0677\pm0.0010$ & $0.0743\pm0.0029$\\
\bottomrule
\end{tabular}
\end{table*}

\begin{figure*}[!t]
\centering
\includegraphics[width=0.92\textwidth]{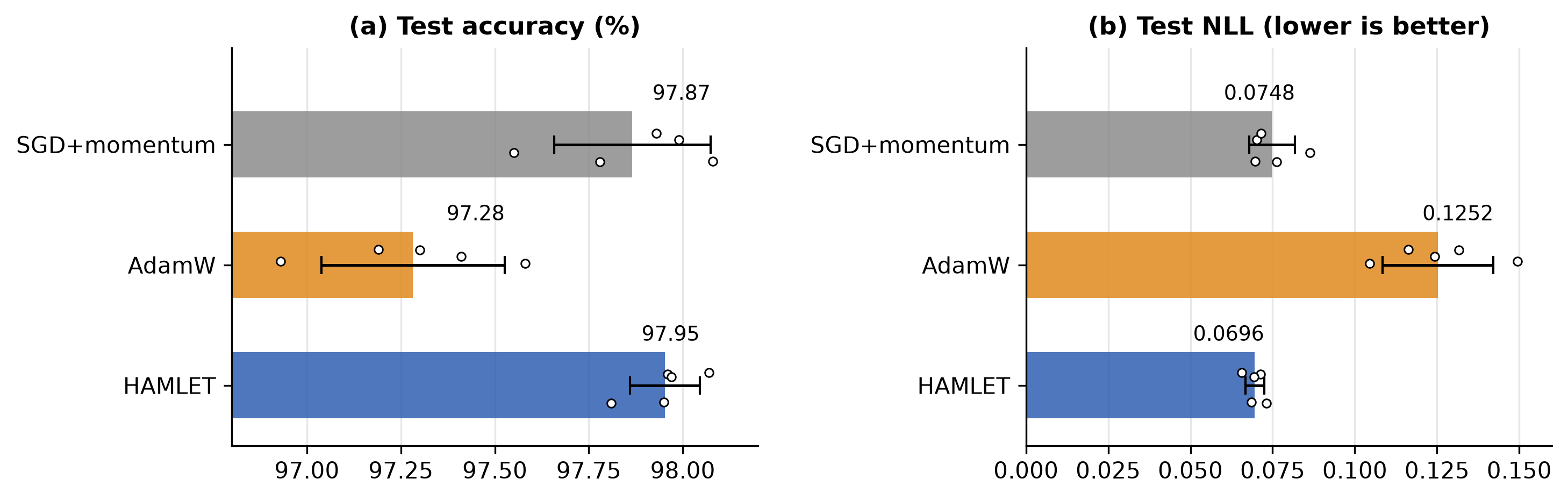}
\caption{MNIST MLP comparison over five paired seeds using one HAMLET
formulation and two standard baselines. Bars show the mean final test metrics,
error bars the sample standard deviation, and open circles the individual
seeds; the numerical values are reported in Table~\ref{tab:mnist-main}.}
\label{fig:mnist-comparison}
\end{figure*}

\section{Controlled Image Reconstruction Study}\label{Controlled Image Reconstruction Study}
\subsection{Image Reconstruction Problem}

We consider a single RGB image from the CIFAR-10 dataset~\cite{cifar10} as a controlled
image-reconstruction problem. The image has spatial resolution
$32\times32$ with three color channels, giving
$32\times32\times3=3072$ real-valued optimization variables after
normalization of the pixel intensities.

Let $x^\star$ denote the original image. A degraded observation is constructed as
\begin{equation}
b = Kx^\star + \epsilon,
\qquad
\epsilon_j \sim \mathcal N(0,0.01^2),
\end{equation}
where $K$ is a separable blur operator applied independently to each color
channel and $\epsilon$ represents additive Gaussian noise. The blur kernel is
chosen as
\[
\frac{1}{16}(1,4,6,4,1)
\]
along each spatial direction.

The reconstructed image $x$ is obtained by minimizing
\begin{equation}
\boxed{
L(x)
=
\frac{1}{2}\|Kx-b\|_2^2
+
\frac{\lambda}{2}\|Dx\|_2^2
+
\frac{\delta}{2}\|x\|_2^2
}
\label{eq:image-loss}
\end{equation}
with $\lambda=0.002$ and $\delta=0.001$. Here, $D$ denotes the discrete
spatial-difference operator. The first term enforces consistency with the
observed image, the second penalizes rapid spatial variation, and the third
provides a small quadratic regularization.
The optimization is carried out directly in the continuous pixel variables.

\subsection{Exact Solution of the Reconstruction Problem}

Differentiating the objective in \eqref{eq:image-loss} gives
\begin{align}
\nabla L(x)
&=
K^{\T}(Kx-b)
+
\lambda D^{\T}Dx
+
\delta x,\\
&=
Hx-c,
\end{align}
where
\begin{equation}
H
=
K^{\T}K
+
\lambda D^{\T}D
+
\delta I,
\qquad
c=K^{\T}b.
\end{equation}

The stationary point therefore satisfies
\[
Hx_{\mathrm{opt}}=c,
\]
and hence
\begin{equation}
\boxed{
x_{\mathrm{opt}}=H^{-1}c
}
\label{eq:image-optimum}
\end{equation}

Since $\delta>0$, the Hessian is positive definite. Indeed, for any
nonzero vector $u$,
\[
u^{\T}Hu
=
\|Ku\|_2^2
+
\lambda\|Du\|_2^2
+
\delta\|u\|_2^2
>0.
\]
The reconstruction problem therefore has a unique minimizer.

Using the stationarity condition $Hx_{\mathrm{opt}}=c$, the objective can
also be written relative to its minimum as
\begin{equation}
L(x)-L(x_{\mathrm{opt}})
=
\frac{1}{2}
(x-x_{\mathrm{opt}})^{\T}
H
(x-x_{\mathrm{opt}}).
\label{eq:image-gap}
\end{equation}
This expression gives a direct measure of the distance from the optimum in
the curvature induced by $H$.

Because the blur and difference operators are periodic, they are diagonal in
the discrete Fourier basis. Writing the spatial frequencies as
$f_u=u/32$ and $f_v=v/32$, the corresponding symbols are
\begin{align}
\widehat K_{uv}
&=
\cos^4(\pi f_u)\cos^4(\pi f_v),\\
\widehat H_{uv}
&=
\widehat K_{uv}^{\,2}
+
4\lambda
\left[
\sin^2(\pi f_u)
+
\sin^2(\pi f_v)
\right]
+
\delta.
\end{align}
The solution is therefore obtained mode by mode as
\begin{equation}
\widehat x_{\mathrm{opt},uv}
=
\frac{
\widehat K_{uv}\widehat b_{uv}
}{
\widehat H_{uv}
}.
\end{equation}

Each color channel is diagonalized independently in the Fourier basis. This
diagonal representation also makes the spectral structure of the image
Hessian explicit, which will be useful when comparing Euclidean and
curvature-matched dynamics.

\subsection{Numerical Reconstruction}

We initialize the reconstruction from
\[
x_0=0,\qquad p_0=0,
\]
and use the same step size and momentum retention for both Euclidean
momentum, \(g=I\), and HAMLET with the curvature metric, \(g=H\).
Since both metrics are constant for this quadratic problem,
\[
\Fgeo=\vect 0.
\]

This comparison isolates the effect of the metric. In the Euclidean case,
different curvature directions evolve at different rates, whereas the choice
\(g=H\) normalizes the quadratic geometry.

\begin{figure*}[!t]
\centering
\includegraphics[width=\textwidth]{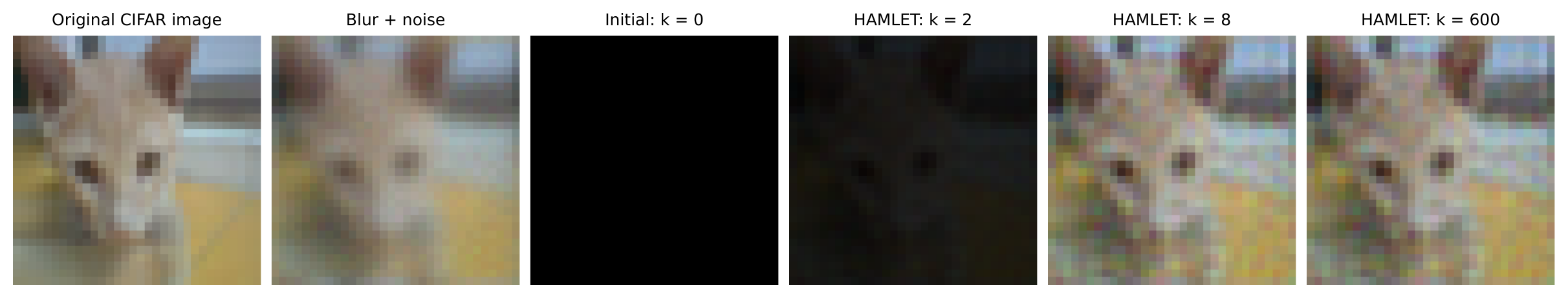}
\caption{Reconstruction of the selected CIFAR-10 image. From left to right:
original image, blurred and noisy observation, zero initialization, two
intermediate HAMLET iterates, and the final reconstruction.}
\label{fig:cifar-sequence}
\end{figure*}

\begin{figure*}[!t]
\centering
\includegraphics[width=\textwidth]{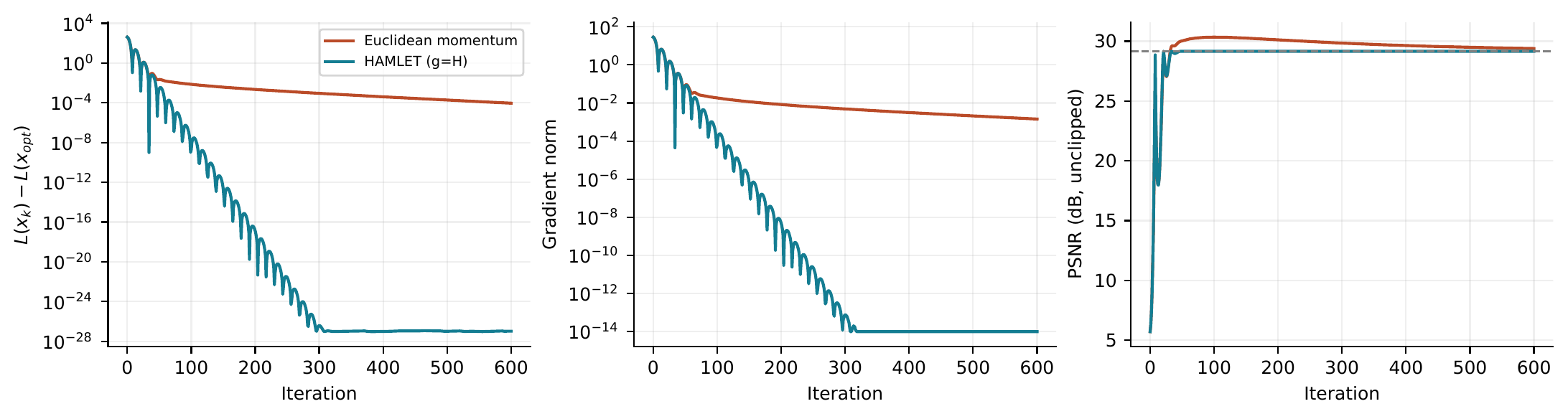}
\caption{Convergence of Euclidean momentum and HAMLET using the same numerical
settings. The panels show the objective gap, gradient norm, and reconstruction
quality during optimization.}
\label{fig:cifar-convergence}
\end{figure*}
HAMLET approaches the known optimum substantially faster in this controlled
quadratic example. The difference is most visible in the objective gap and
gradient norm, which reflect the optimization problem directly. The image
quality measure need not follow the objective monotonically because the
regularized minimum is not defined solely by pixel-wise similarity to the
original image.
\subsection{Energy Evolution}

For the constant-metric problem, define the potential, kinetic, and total
energies as
\[
V_k=L(x_k),
\qquad
T_k=\frac{1}{2}p_k^{\T}g^{-1}p_k,
\qquad
\mathcal H_k=T_k+V_k.
\]

Figure~\ref{fig:cifar-energy} shows the exchange between kinetic and potential
energy during the optimization. The total energy decreases as the trajectory
approaches the stationary state, while kinetic energy becomes small near the
minimum.

This behavior is the discrete counterpart of the continuous dissipation law
\[
\dot{\mathcal H}
=
-\gamma p^{\T}g^{-1}p
\leq0.
\]
The continuous identity is exact, whereas the discrete trajectory depends on
the chosen time step and integration rule.

\begin{figure*}[!t]
\centering
\includegraphics[width=0.96\textwidth]{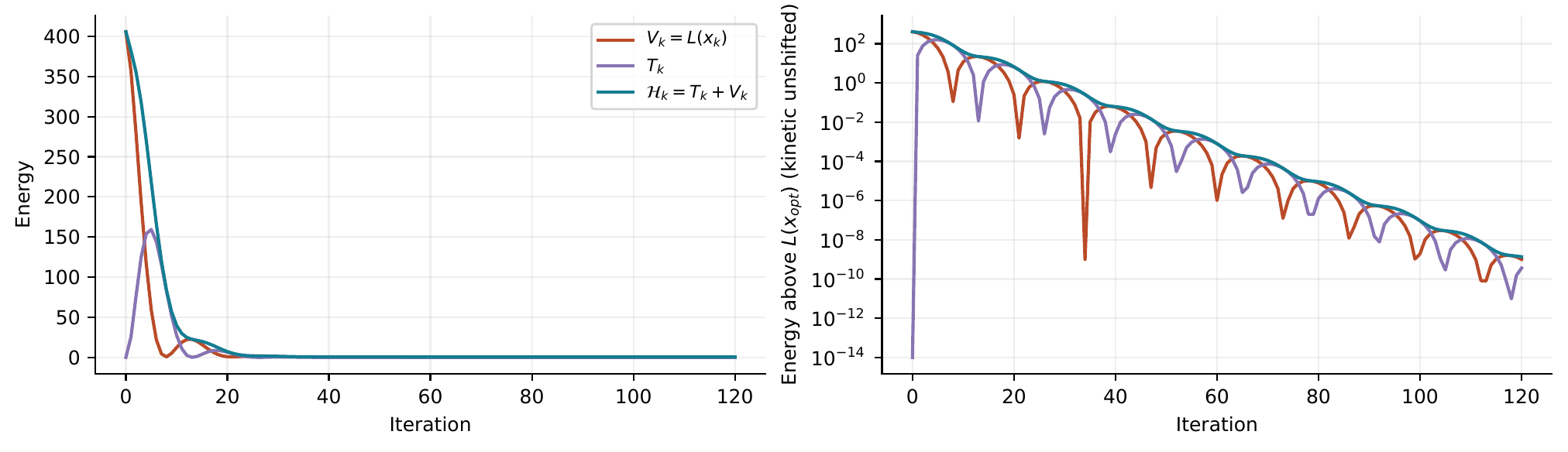}
\caption{Potential, kinetic, and total energy during HAMLET optimization.
The logarithmic representation emphasizes the decay toward the minimum.}
\label{fig:cifar-energy}
\end{figure*}
\section{Spectral Interpretation of Curvature Normalization}
\label{sec:spectral-normalization}

Let
\[
e=x-x_{\mathrm{opt}}.
\]
For the curvature metric \(g=H\), the HAMLET equations become
\[
\dot e=H^{-1}p,
\qquad
\dot p=-He-\gamma p.
\]
Using \(p=H\dot e\), we obtain
\begin{equation}
H\ddot e+\gamma H\dot e+He=0,
\end{equation}
and therefore
\begin{equation}
\boxed{
\ddot e+\gamma\dot e+e=0.
}
\label{eq:image-normalized-ode}
\end{equation}

The significance of this equation is that the Hessian no longer appears in
the modal dynamics. In contrast, Euclidean momentum gives
\[
\ddot e+\gamma\dot e+He=0.
\]
Writing
\[
H=Q\Lambda Q^{\T},
\qquad
e=Qz,
\]
the Euclidean modes satisfy
\[
\ddot z_i+\gamma\dot z_i+\lambda_i(H)z_i=0,
\]
whereas the curvature-matched dynamics reduce to
\[
\ddot z_i+\gamma\dot z_i+z_i=0.
\]

Thus every curvature-normalized mode evolves according to the same scalar
equation, while the Euclidean dynamics retain the spread of Hessian
eigenvalues.
The same normalization appears in discrete time. Defining
\[
q_k=H^{-1}p_k,
\]
gives
\[
q_{k+1}
=
\beta q_k-he_k,
\qquad
e_{k+1}
=
e_k+hq_{k+1}.
\]
Hence
\[
\begin{bmatrix}
e_{k+1}\\
q_{k+1}
\end{bmatrix}
=
\begin{bmatrix}
1-h^2 & h\beta\\
-h & \beta
\end{bmatrix}
\begin{bmatrix}
e_k\\
q_k
\end{bmatrix}.
\]

The same \(2\times2\) map therefore governs every curvature-normalized mode.
For Euclidean momentum, the corresponding map contains
\(h^2\lambda_i(H)\), so different eigendirections evolve at different rates.
\begin{figure}[!t]
\centering
\includegraphics[width=\columnwidth]{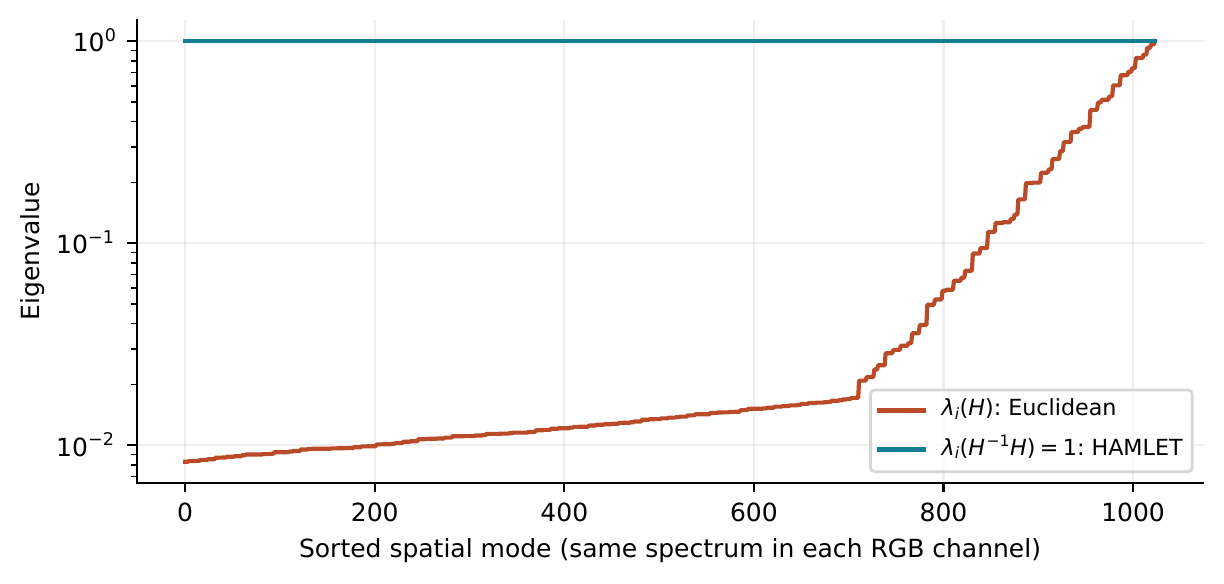}
\caption{Spectrum of the image Hessian before and after curvature
normalization. Choosing \(g=H\) maps the transformed spectrum to unity.}
\label{fig:cifar-spectrum}
\end{figure}

\begin{figure}[!t]
\centering
\includegraphics[width=\columnwidth]{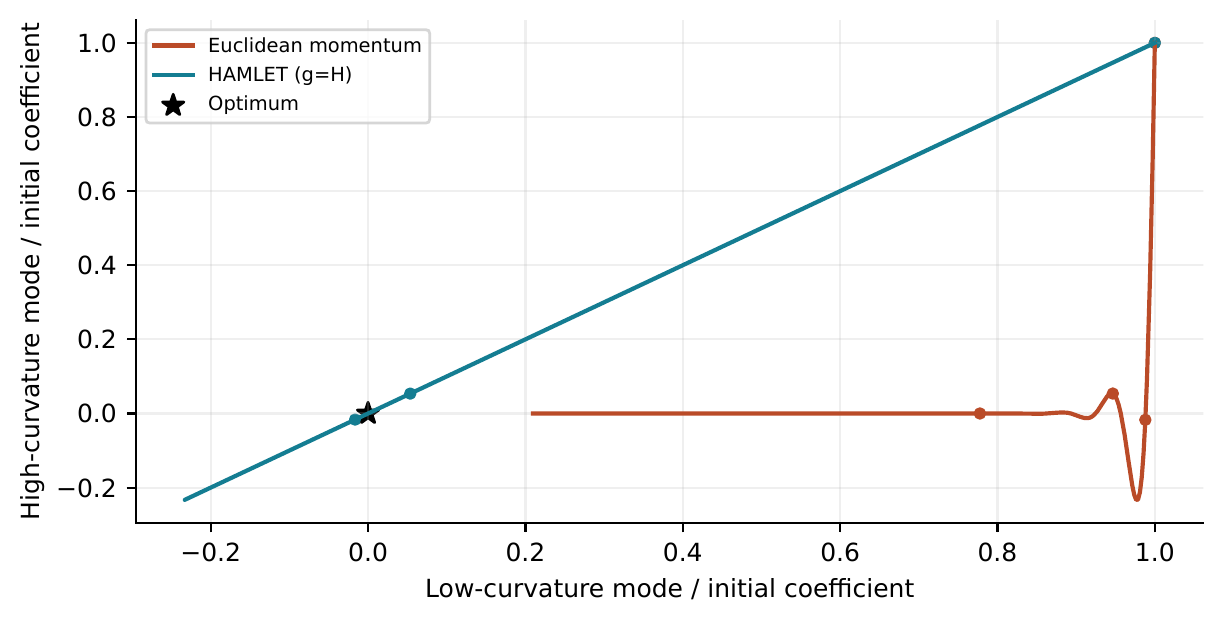}
\caption{Evolution of representative low- and high-curvature modes.
Euclidean momentum relaxes the two modes at different rates, whereas the
curvature-matched HAMLET dynamics collapse them onto the same normalized
trajectory.}
\label{fig:cifar-trajectory}
\end{figure}
This example makes the role of the metric explicit: exact curvature does not
change the location of the minimizer, but it removes the curvature imbalance
from the quadratic dynamics. The resulting modal equations are identical,
which explains the faster contraction observed in the reconstruction
experiment.
\section{HAMLET on a Tensor-Network Parameter Manifold}
\label{sec:mps}

The image reconstruction problem considered previously is quadratic in the
pixel variables and therefore admits a constant Hessian metric. In that
setting the geometric force vanishes. To examine the full geometric dynamics
of HAMLET, we now represent the same image as a matrix product state (MPS).
The physical image remains unchanged, but the optimization variables become
the tensor cores of the MPS. Since the reconstructed image depends
nonlinearly on these cores, the associated metric varies across parameter
space and the geometric-force term becomes active.

\subsection{Matrix Product State Representation}

Let the row and column indices of the $32\times32$ image be written in binary,
\[
r=\sum_{m=1}^{5}r_m2^{5-m},
\qquad
s=\sum_{m=1}^{5}s_m2^{5-m},
\]
with $r_m,s_m\in\{0,1\}$, and let $c\in\{0,1,2\}$ denote the color channel.

The image is represented by an $11$-site matrix product
state~\cite{oseledets-tt,khoromskij-qtt},
\begin{equation}
x_{rsc}
=
A^{(0)}[c]
A^{(1)}[r_1]
A^{(2)}[s_1]
\cdots
A^{(9)}[r_5]
A^{(10)}[s_5].
\label{eq:mps}
\end{equation}

The complete set of tensor cores is denoted by
\[
A=
\left(
A^{(0)},A^{(1)},\ldots,A^{(10)}
\right).
\]
For the numerical example we use a maximum bond dimension
$\chi=8$, resulting in $865$ trainable parameters instead of the
$3072$ pixel variables of the original representation.

\begin{figure}[!t]\centering
\begin{tikzpicture}[x=0.72cm,y=0.72cm,
 core/.style={circle,draw,thick,fill=blue!8,minimum size=0.52cm,inner sep=0pt,font=\scriptsize},
 leg/.style={font=\scriptsize}]
\foreach \k/\lab in {0/c,1/r_1,2/s_1,3/r_2,4/s_2,5/r_3,6/s_3,7/r_4,8/s_4,9/r_5,10/s_5}{
 \node[core] (A\k) at (\k,0) {$\k$};
 \draw[thick] (A\k.south) -- ++(0,-0.45) node[below,leg] {$\lab$};}
\foreach \k/\b in {0/3,1/6,2/8,3/8,4/8,5/8,6/8,7/8,8/4,9/2}{
 \pgfmathtruncatemacro{\n}{\k+1}
 \draw[thick] (A\k) -- node[above,leg] {\b} (A\n);}
\end{tikzpicture}
\caption{The $32\times32\times3$ cat image as an $11$-site MPS. Circles
are cores $A^{(k)}$, lower legs are the color and binary row/column
indices of \eqref{eq:mps}, and edge labels are bond dimensions for
$\chi=8$.}
\label{fig:mps-diagram}
\end{figure}

\subsection{Image Reconstruction on the MPS Manifold}

Substituting the MPS representation into the image reconstruction objective
defines a loss over the tensor-network parameters,
\[
L(A)\equiv L(x(A)).
\]

Since the exact pixel-space minimizer $x_{\mathrm{opt}}$ is known from the
previous section, the objective can be written as
\begin{equation}
L(x(A))-L(x_{\mathrm{opt}})
=
\frac{1}{2}
\left(x(A)-x_{\mathrm{opt}}\right)^{\T}
H
\left(x(A)-x_{\mathrm{opt}}\right).
\label{eq:mps-reform}
\end{equation}

The MPS optimization problem can therefore be interpreted as searching for
the representation within the chosen tensor-network manifold that best
approximates the pixel-space optimum in the geometry induced by $H$.

Unlike the original pixel problem, however, $x(A)$ is multilinear in the MPS
cores. The optimization landscape in the tensor parameters is therefore
nonlinear and, in general, nonconvex.

\subsection{Local Curvature and the MPS Metric}

Consider a single core $A^{(j)}$ and define
\[
a_j=\operatorname{vec}A^{(j)}.
\]
If all remaining cores are held fixed, the reconstructed image depends
linearly on $a_j$,
\begin{equation}
x=J_j(A)a_j,
\label{eq:mps-jacobian}
\end{equation}
where $J_j$ contains the contraction of the tensor network surrounding
site $j$.

The gradient with respect to the core is therefore
\begin{equation}
\nabla_{a_j}L
=
J_j^{\T}(Hx-c),
\end{equation}
while the corresponding local Hessian block is
\begin{equation}
B_j
=
J_j^{\T}HJ_j.
\label{eq:mps-block-hessian}
\end{equation}

This naturally defines a block-structured HAMLET metric,
\begin{equation}
\boxed{
g(A)
=
\bigoplus_{j=0}^{10}
\left(
J_j^{\T}HJ_j+s_jI
\right)
}
\label{eq:mps-metric}
\end{equation}
where $s_j>0$ is a regularization that keeps each block positive
definite.

The important difference from the pixel-space problem is that $J_j$
depends on the surrounding tensor cores. Consequently,
\[
g=g(A),
\]
and the metric changes as the optimization trajectory moves through the MPS
parameter manifold.

\subsection{Geometric Force on the MPS Manifold}

Because the metric now depends on the tensor-network parameters, the geometric
force is generally nonzero. Writing
\[
v=g^{-1}p,
\]
the geometric contribution to the momentum equation is
\begin{equation}
\boxed{
F_{\mathrm{geo},i}
=
-\frac{1}{2}
v^{\T}
(\partial_i g)
v.
}
\label{eq:mps-fgeo}
\end{equation}

This term measures how the local kinetic geometry changes as the MPS cores
evolve. It is absent in the constant-metric image problem but becomes part of
the dynamics once the optimization is performed on the tensor-network
parameter manifold.

The MPS example therefore provides a direct realization of the general
HAMLET equations,
\begin{equation}
\dot A=g^{-1}(A)p,
\qquad
\dot p
=
-\nabla_A L
-
\Fgeo
-
\gamma p,
\end{equation}
with a position-dependent metric.

\subsection{Connection with Alternating Optimization}

The block metric also provides a simple connection with the alternating
linear scheme commonly used for tensor networks~\cite{holtz-als}. Suppose that only a single
core is updated, the incoming momentum is removed, and the overdamped
unit-step limit is considered. With the block metric
\eqref{eq:mps-metric}, the HAMLET update
$a_j^{+}=a_j-(J_j^{\T}HJ_j+s_jI)^{-1}J_j^{\T}(HJ_ja_j-c)$ then reduces to
\begin{equation}
a_j^{+}
=
\left(
J_j^{\T}HJ_j+s_jI
\right)^{-1}
\left(
J_j^{\T}c+s_ja_j
\right).
\label{eq:mps-als}
\end{equation}

This is the proximally regularized one-site ALS update, i.e.\ the minimizer of
$L(J_ja)+\tfrac{s_j}{2}\|a-a_j\|^2$ over $a$. As $s_j\to0$ it reduces to
the standard one-site ALS update
$a_j^{+}=(J_j^{\T}HJ_j)^{-1}J_j^{\T}c$ whenever $J_j^{\T}HJ_j$ is
invertible; the shift $s_j>0$ keeps the update well defined when this block
is singular or ill conditioned. In this sense, alternating optimization
can be viewed as a sequential overdamped limit of the same local curvature
construction. HAMLET instead retains phase-space dynamics and allows the
cores to evolve with inertia and the geometric-force contribution.

\begin{figure*}[!t]
\centering
\includegraphics[width=0.90\textwidth]{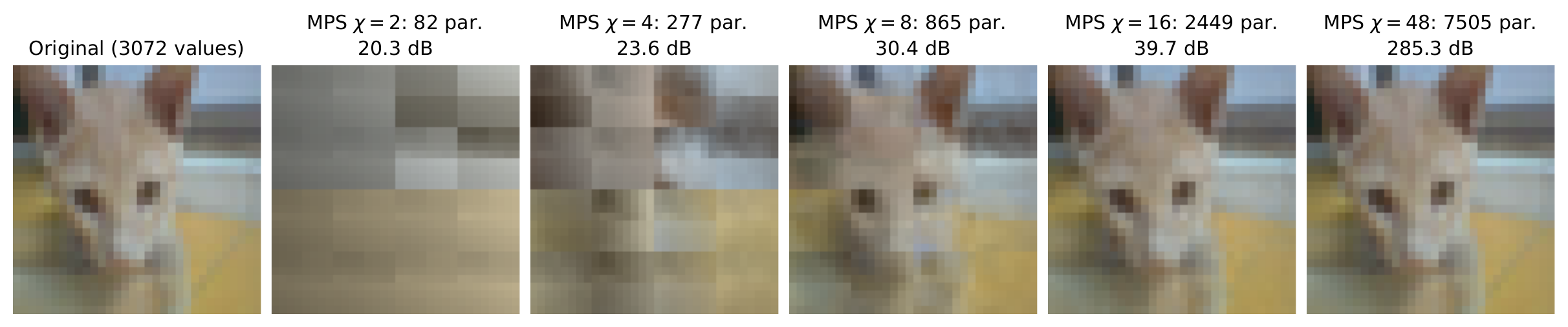}
\caption{Representation of the image at increasing MPS bond dimension.
Larger bond dimension increases the expressive capacity of the tensor
network and allows progressively finer image structure to be retained.}
\label{fig:mps-compression}
\end{figure*}
\subsection{MPS Reconstruction Dynamics}

The MPS reconstruction is initialized from a compressed representation of the
degraded image and evolved under the block metric
\eqref{eq:mps-metric}. Unlike the pixel-space example, the metric now changes
with the optimization variables, so both the curvature and the geometric
force evolve along the trajectory. The reported MPS-HAMLET runs use $h=0.3$,
$\beta=0.8$, and the uniform regularization $s_j=10^3$ for every core $j$.
These values were selected empirically for this finite-step experiment; the
quadratic stability condition of
Section~\ref{Discrete Dynamics and Stability and Phase analysis} is not
asserted globally on the nonlinear MPS trajectory.

Figure~\ref{fig:mps-sequence} shows representative stages of the resulting
reconstruction. The purpose of this example is not to rank tensor-network
optimizers, but to demonstrate the behavior of HAMLET when the parameter
geometry is genuinely position dependent. Because of the MPS gauge freedom,
the convergence guarantee of Theorem~\ref{thm:stationary} does not directly
apply here (Remark~\ref{rem:compactness}); the behavior shown is numerical.

\begin{figure*}[!t]
\centering
\includegraphics[width=0.84\textwidth]{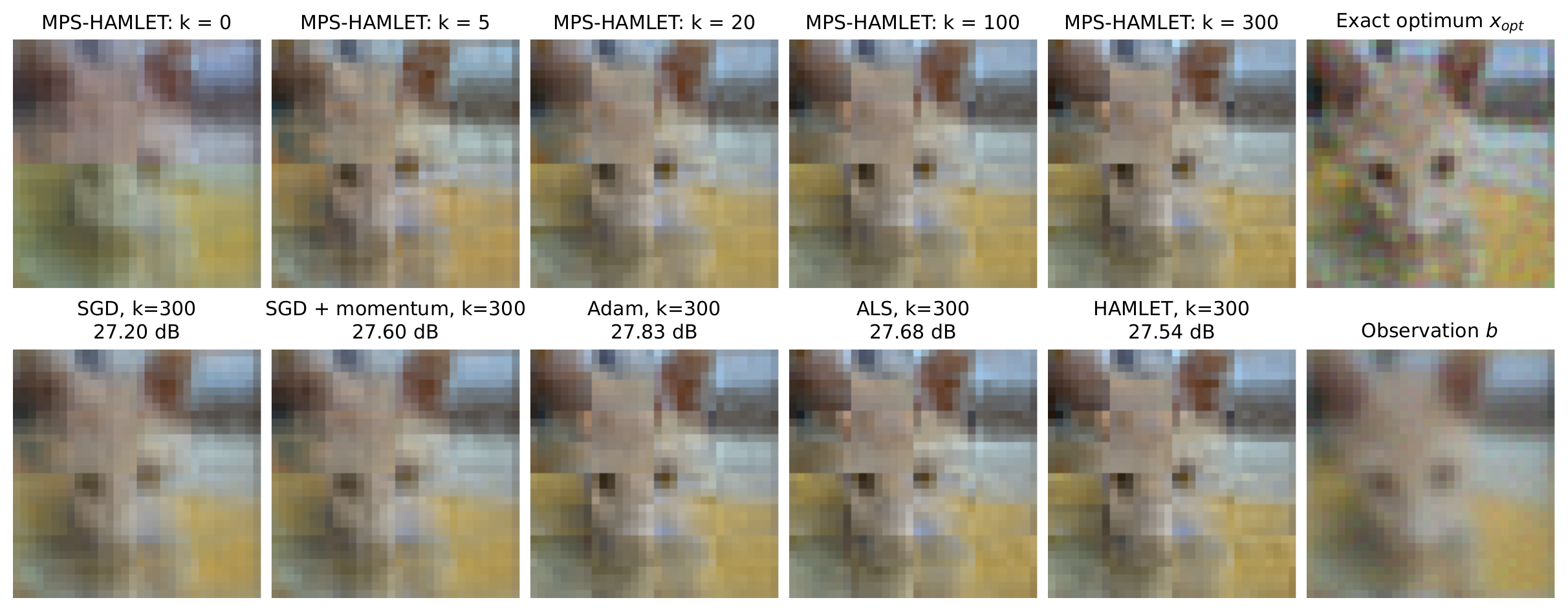}
\caption{MPS reconstruction ($\chi=8$). Top row: MPS-HAMLET iterates at
increasing $k$, evolving toward a low-loss image while the underlying metric
changes with the MPS cores, next to the exact pixel-space optimum
$x_{\mathrm{opt}}$. Bottom row: SGD, SGD with momentum, Adam, ALS, and HAMLET
on the same MPS after $300$ iterations, with PSNR, and the observation $b$.}
\label{fig:mps-sequence}
\end{figure*}
The corresponding energy evolution is shown in
Fig.~\ref{fig:mps-energy}. The potential and kinetic contributions evolve
together as the trajectory approaches a stationary region, while the
geometric-force norm provides a direct measure of the contribution generated
by the changing metric.

\begin{figure}[!t]
\centering
\includegraphics[width=\columnwidth]{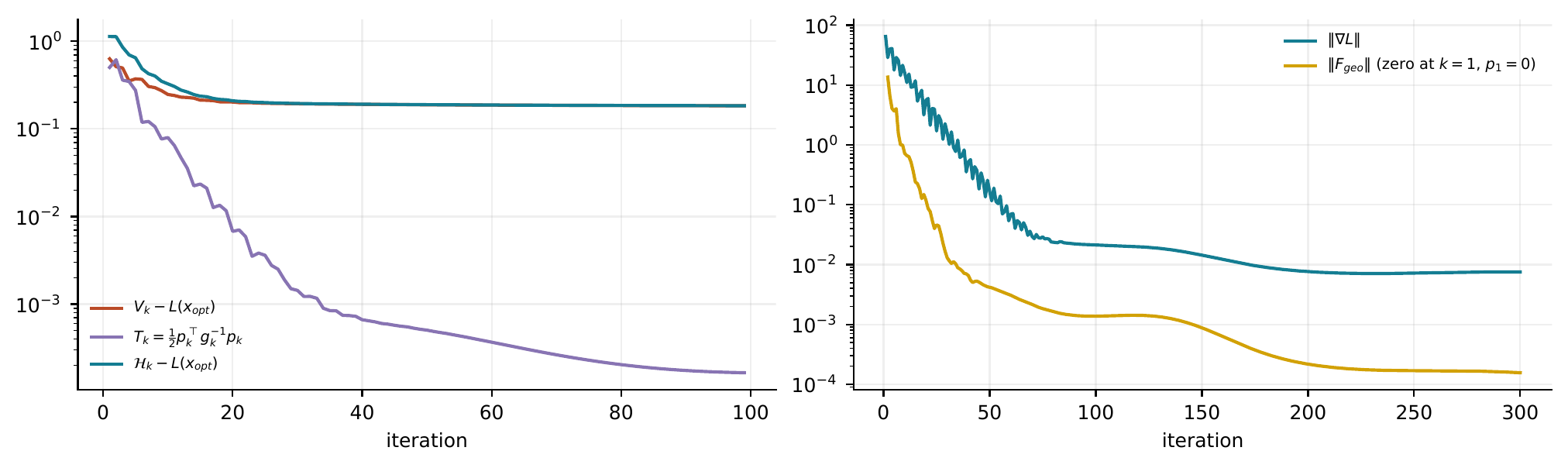}
\caption{Energy and geometric dynamics of HAMLET on the MPS parameter
manifold. The left panel shows the potential, kinetic, and total energy,
while the right panel shows the gradient and geometric-force norms during
the optimization.}
\label{fig:mps-energy}
\end{figure}
\subsection{Interpretation}

The two image representations expose two different regimes of HAMLET. In
pixel space the reconstruction objective has a constant Hessian metric,
\[
g=H,
\]
and therefore
\[
\Fgeo=\vect 0.
\]

In the MPS representation, the same physical image is described through
nonlinear tensor-network coordinates. The resulting metric depends on the
current tensor cores,
\[
g=g(A),
\]
and consequently
\[
\Fgeo\neq\vect 0.
\]

The MPS problem therefore serves as a controlled example of the full
geometric HAMLET dynamics. It shows that the geometric-force term is not an
additional heuristic correction, but follows directly from the
position-dependence of the kinetic metric.

\section{Poincar\'e Analysis of HAMLET Dynamics}
\label{sec:poincare}

The preceding image problems expose two different geometric regimes of
HAMLET. In pixel space the curvature metric is constant, whereas in the MPS
representation the metric depends on the tensor-network parameters. A
Poincar\'e section provides a complementary phase-space view of these
dynamics by recording repeated crossings of a chosen hypersurface rather
than following the complete trajectory.

For a dissipative system the resulting return map is not expected to be
periodic. Instead, it measures how successive crossings contract as the
trajectory approaches a stationary state.

\subsection{Poincar\'e Section and Return Map}

Let $\Sigma$ denote a hypersurface crossed transversally by the HAMLET flow.
A convenient section is defined by the turning points of the loss,
\begin{equation}
\Sigma_L
=
\left\{
(\tht,\mom):
\dot L
=
\nabla L^{\T}g^{-1}\mom
=
0,
\qquad
\ddot L<0
\right\}.
\label{eq:poincare-section}
\end{equation}

The associated first-return map
\[
P:\Sigma_L\rightarrow\Sigma_L
\]
takes one crossing of the trajectory to the next. Because HAMLET is
dissipative,
\[
\dot H_0
=
-\gamma\mom^{\T}g^{-1}\mom
\leq0,
\]
the Hamiltonian is nonincreasing along the flow. Between two crossings of a
nonstationary trajectory $\mom$ cannot vanish identically, so
$H_0(P(z))<H_0(z)$ and $P$ has no periodic points on $\Sigma_L$. Stationary
states do not lie on $\Sigma_L$, since they have $\ddot L=0$ rather than
$\ddot L<0$; they appear only as limit points of successive crossings. Energy decrease alone, however, does not make $P$ a contraction. Throughout this section we
say that $P$ is \emph{contracting} toward a stationary point $z_*$ if there is
$\rho<1$ with $\|P(z)-z_*\|\leq\rho\,\|z-z_*\|$ for all crossings $z$ in a
neighborhood of $z_*$. We establish this exactly only for the linear modal
maps of the quadratic pixel problem in Section~\ref{sec:poincare-modal},
where $\rho=|\mu_i|$. For the nonlinear MPS dynamics we make no general
contraction claim and instead report the empirical ratio of successive
return values.

\subsection{Modal Return Maps in Pixel Space}
\label{sec:poincare-modal}

For the quadratic image problem, each eigendirection evolves independently.
A mode satisfies
\begin{equation}
\ddot e_i
+
\gamma\dot e_i
+
\omega_i^2 e_i
=
0,
\end{equation}
with
\[
\omega_i^2=\lambda_i(H)
\]
for Euclidean momentum and
\[
\omega_i^2=1
\]
for the curvature-matched HAMLET metric.

For an underdamped mode,
\[
\omega_i>\frac{\gamma}{2},
\]
the solution can be written as
\begin{equation}
e_i(t)
=
A_i
e^{-\gamma t/2}
\cos(\Omega_i t+\phi_i),
\qquad
\Omega_i
=
\sqrt{\omega_i^2-\frac{\gamma^2}{4}}.
\end{equation}

Successive turning points are separated by $\pi/\Omega_i$, and the
corresponding return map becomes
\begin{equation}
\boxed{
P_i(e)
=
\mu_i e,
\qquad
\mu_i
=
-\exp\left(
-\frac{\gamma\pi}{2\Omega_i}
\right).
}
\label{eq:continuous-return}
\end{equation}

The magnitude $|\mu_i|$ therefore measures the contraction of a mode between
successive crossings.

For the Euclidean metric, $\Omega_i$ depends on the Hessian eigenvalue
$\lambda_i(H)$, so different curvature directions possess different return
maps. Under the curvature-matched metric, however,
\[
\omega_i=1
\]
for every mode. All modes therefore share the same oscillation frequency and
the same Poincar\'e contraction factor.

\begin{figure*}[!t]
\centering
\includegraphics[width=\textwidth]{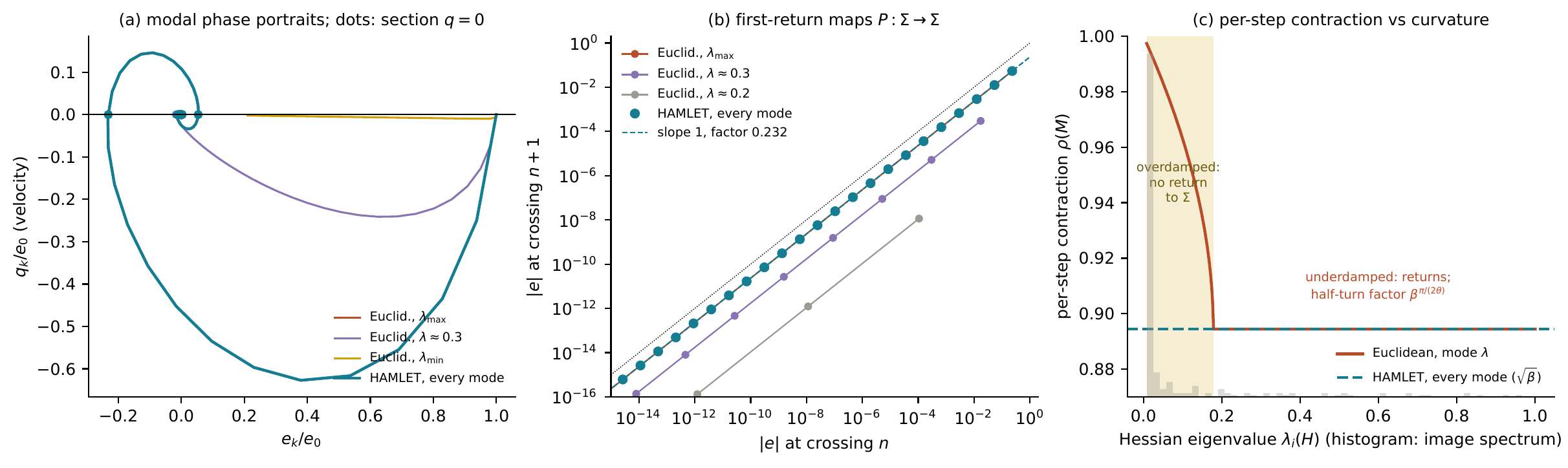}
\caption{Modal Poincar\'e analysis of the pixel-space image problem.
Euclidean momentum produces curvature-dependent phase-space trajectories
and return contractions, whereas the curvature-matched HAMLET metric
collapses all modes onto the same normalized return dynamics.}
\label{fig:modal-poincare}
\end{figure*}
Figure~\ref{fig:modal-poincare} gives a phase-space interpretation of
curvature normalization. With the Euclidean metric, different Hessian
eigenvalues generate different rotations and contraction rates, while
sufficiently low-curvature directions become overdamped and cease to return
to the Poincar\'e section. For the curvature-matched metric $g=H$, all
transformed eigenvalues are equal to one. Consequently, every mode follows
the same normalized phase-space trajectory and possesses the same return
contraction. The Poincar\'e picture therefore provides a dynamical
interpretation of the spectral normalization derived previously.
The same picture appears in discrete time. Defining the modal velocity
$q=g^{-1}p$, the update for a mode with transformed curvature $a$ can be
written as
\begin{equation}
\begin{bmatrix}
e_{k+1}\\
q_{k+1}
\end{bmatrix}
=
\begin{bmatrix}
1-h^2a & h\beta\\
-ha & \beta
\end{bmatrix}
\begin{bmatrix}
e_k\\
q_k
\end{bmatrix}.
\label{eq:modal-map}
\end{equation}

For Euclidean momentum,
\[
a=\lambda_i(H),
\]
whereas curvature matching gives
\[
a=1
\]
for every mode. The determinant of the map is $\beta$, so each modal
phase-space area contracts by the same overall factor, while the metric
determines how this contraction is distributed between rotation and decay.

Low-curvature Euclidean modes can become overdamped and may cease to return
to the Poincar\'e section. Curvature matching removes this imbalance by
placing all modes on the same normalized dynamics.
\subsection{Poincar\'e View of the MPS Dynamics}

The MPS problem does not possess a global set of independent Fourier modes,
because both the loss geometry and the metric vary with the tensor cores.
Nevertheless, the same section \eqref{eq:poincare-section} can be used to
sample the trajectory.

The return points are no longer equally spaced and the empirical ratios of
successive return values are not constant. This reflects the nonlinear and position-dependent nature
of the MPS parameter manifold: as the cores evolve, so do the local
curvature and the geometric force.

The block metric normalizes curvature locally within each tensor core, but
does not remove all couplings between different cores. Consequently, some
directions remain weakly curved or overdamped and stop producing repeated
returns. In this sense, the Poincar\'e map provides a direct geometric view
of the difference between exact curvature normalization in pixel space and
the approximate block curvature used on the MPS manifold.

\begin{figure*}[!t]
\centering
\includegraphics[width=\textwidth]{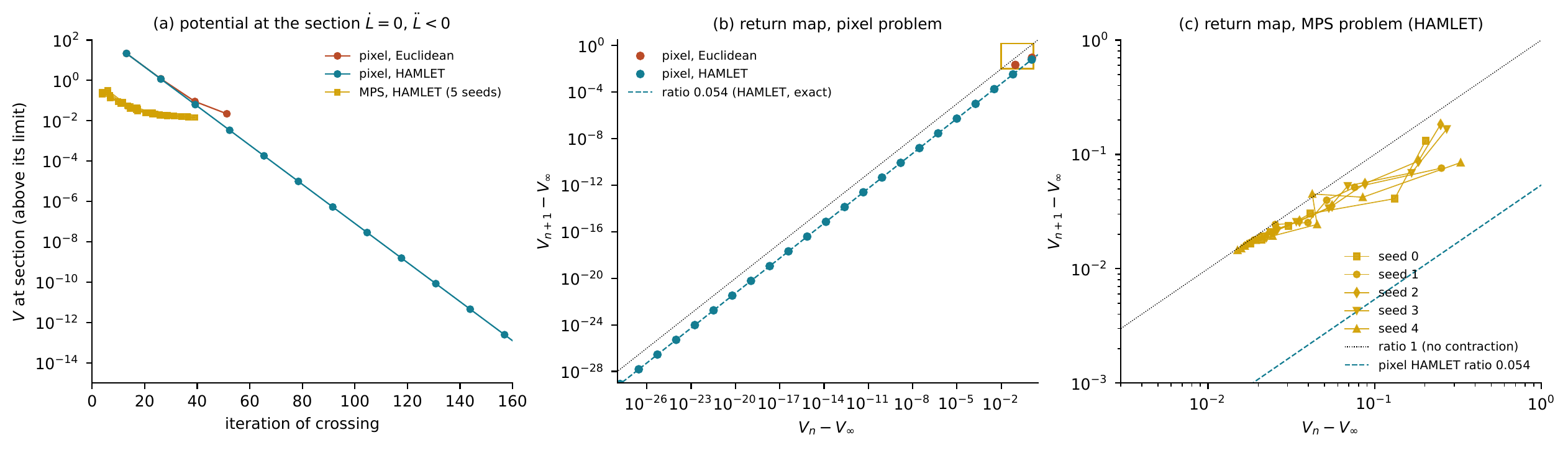}
\caption{Aggregate Poincar\'e return dynamics for the pixel-space and MPS
reconstructions. The curvature-matched pixel problem exhibits a regular
contracting return map, while the MPS trajectory shows irregular early
returns that disappear as overdamped directions dominate the late-time
motion.}
\label{fig:power-poincare}
\end{figure*}
\subsection{Interpretation}

The Poincar\'e analysis gives a phase-space interpretation of the role of
the metric. In the pixel-space problem,
\[
g=H
\]
removes the spread of Hessian eigenvalues, causing every mode to follow the
same damped rotation and the same return map. With the Euclidean metric,
different curvature directions contract at different rates and sufficiently
weak directions become overdamped.

The MPS problem lies between these two limits. Its block metric adapts to the
local tensor-network geometry, but does not fully normalize the complete
coupled Hessian. The resulting return dynamics are therefore nonuniform.

Thus the metric affects not only the direction of parameter motion, but also
the structure of the phase-space contraction itself.

\section{Discussion}
\label{sec:discussion}

The aim of this work is to study optimization as a dissipative dynamical
system evolving on a structured parameter space. Within this formulation,
the loss plays the role of potential energy, while the metric determines
how canonical momentum is converted into parameter velocity. When the
metric depends on the parameters, its variation produces an additional
geometric force that changes the trajectory followed by the optimizer.

The two image-reconstruction examples illustrate complementary regimes of
this construction. In the direct pixel representation, the quadratic
objective admits a constant curvature metric. Choosing the Hessian as the
metric removes the spread of curvature eigenvalues from the modal dynamics,
so that all directions evolve according to the same normalized equation.
This provides a simple setting in which the role of the metric can be seen
analytically.

The MPS representation changes the problem qualitatively. Although the
underlying image is the same, the tensor-network coordinates produce a
position-dependent metric and therefore activate the geometric-force term.
The resulting dynamics provide a controlled example of HAMLET on a genuinely
nonlinear parameter manifold.

The Poincar\'e analysis gives a complementary phase-space interpretation.
For the curvature-matched pixel problem, all modes possess the same return
dynamics. With a Euclidean metric, the return behavior depends on curvature
and weak directions may become overdamped. On the MPS manifold the return
structure becomes nonuniform, reflecting the fact that the block metric
captures local curvature without fully normalizing the coupled parameter
space.

The MNIST benchmark supplies a complementary learning result. HAMLET achieves
the best mean test accuracy and test NLL of the evaluated configurations,
showing that the metric--momentum mechanism remains viable in a standard
classification network. This provides an initial empirical demonstration of
the single optimizer derived in the mathematical sections, while the limited
five-seed, single-architecture scope motivates broader future evaluation.

Taken together, these results suggest that the metric should be viewed not
only as a preconditioning operator, but as part of the dynamical structure
that determines the trajectory, inertia, and phase-space contraction of the
optimizer.
\section{Conclusion}
\label{sec:conclusion}

We introduced HAMLET as a framework in which machine-learning optimization
is formulated as dissipative Hamiltonian motion on a parameter manifold.
Starting from a metric kinetic energy and the loss as potential energy, we
derived the canonical momentum, Hamiltonian equations, geometric force, and
metric-compatible dissipation law.

The framework was evaluated on an MNIST MLP, where HAMLET reached 97.95\% mean
test accuracy and 0.0696 mean test NLL over five paired seeds. It was then
examined on a quadratic image-reconstruction problem,
where an exact curvature metric normalizes the modal dynamics, and then on an
MPS parameterization of the same image, where the metric becomes
position dependent and the geometric force becomes nonzero. Poincar\'e
sections provided an additional phase-space interpretation of how the metric
changes the contraction of different modes.

These results establish the mathematical core of HAMLET and provide a basis
for studying more scalable geometric and dynamical optimization methods.
\section*{Acknowledgements}

Sayan Mukherjee is supported by the Center of Innovation for Sustainable
Quantum AI (JST Grant Number JPMJPF2221).

\bibliographystyle{IEEEtran}
\bibliography{references1}
\end{document}